\definecolor{colorhkust}{RGB}{20,43,140}
\definecolor{colortsinghua}{RGB}{116,52,129}
\definecolor{color1}{RGB}{128,0,0}
\newtheorem{lemma}{Lemma}
\newtheorem{theorem}{Theorem}
\newtheorem{remark}{Remark}
\newcommand{\Ex}{\mathbb{E}}
\newcommand{\comment}[1]{}
\newcommand{\Beta}{\text{Beta}}
\newcommand{\remove}[1]{\textcolor{red}{}}
\newcommand{\add}[1]{#1}
\begin{document}
	\title{Faster Activity and Data Detection in Massive Random Access: A Multi-armed Bandit Approach}
\author{Jialin Dong, \textit{Student Member}, \textit{IEEE},  Jun Zhang, \textit{Senior Member}, \textit{IEEE},  Yuanming~Shi, \textit{Member}, \textit{IEEE},\\ and Jessie Hui Wang
	
	\thanks{J. Dong is with the School of Information Science and Technology, ShanghaiTech University, Shanghai 201210, China, and the Department of Electronic and Information Engineering, The Hong Kong Polytechnic University, Hong Kong (e-mail: dongjl@shanghaitech.edu.cn). 
		
		J. Zhang is with the the Department of Electronic and Information Engineering, The Hong Kong Polytechnic University, Hong Kong (e-mail: jun-eie.zhang@polyu.edu.hk).
		
		 Y. Shi is with the School of Information Science and Technology, ShanghaiTech University, Shanghai 201210, China. (e-mail: shiym@shanghaitech.edu.cn).
		 
		  J. H. Wang is with the Institute for Network Sciences and Cyberspace, Tsinghua University, Beijing 100084, China, and also with the Beijing National Research Center for Information Science and Technology, Beijing 100084, China (e-mail: jessiewang@tsinghua.edu.cn). (The corresponding author is J. H. Wang.)}
	
}
	
	\maketitle
\begin{abstract}
This paper investigates the grant-free random access with massive IoT devices. By embedding the data symbols in the signature sequences, joint device activity detection and data decoding can be achieved, which, however, significantly increases the computational complexity.
Coordinate descent algorithms that enjoy a low per-iteration complexity have been employed to solve the detection problem, but previous works typically employ a random coordinate selection policy which leads to slow convergence. In this paper, we develop multi-armed bandit approaches for more efficient detection via coordinate descent, which make a delicate trade-off between \emph{exploration} and \emph{exploitation} in coordinate selection. Specifically, we first propose a bandit based strategy, i.e., Bernoulli sampling, to speed up the convergence rate of coordinate descent, by learning which coordinates will result in more aggressive descent of the objective function. To further improve the convergence rate, an inner multi-armed bandit problem is established to learn the exploration policy of Bernoulli sampling. Both convergence rate analysis and simulation results are provided to show that the proposed bandit based algorithms enjoy faster convergence rates with a lower time complexity compared with the state-of-the-art algorithm. Furthermore, our proposed algorithms are applicable to different scenarios, e.g., massive random access with low-precision analog-to-digital
converters (ADCs).

\end{abstract}

\begin{IEEEkeywords}
Massive connectivity, Internet of Things, coordinate descent, multi-armed bandit, Thompson sampling.
\end{IEEEkeywords}
\section{Introduction}
The advancements in wireless technologies have enabled connecting sensors, mobile devices, and machines for various mobile applications, leading to an era of Internet-of-Things (IoT) \cite{zanella2014internet}. IoT connectivity
involves connecting a massive number of
devices, which form the foundation for many applications, e.g., smart home, smart city,
healthcare, transportation system, etc. Thus it has been regarded as an indispensable demand for future wireless networks \cite{8808168}. With a large number of devices to connect with the base station (BS), in the order $10^4$ to $10^6$, massive connectivity brings formidable technical challenges, and has attracted lots of attentions from both the academia and industry \cite{8454392,hasan2013random}. 

The sporadic traffic is one unique feature in massive IoT connectivity, which means that only a restricted portion of devices are active at any given time instant \cite{liu2018sparse}.
This is because IoT devices are often
designed to sleep most of the time to save energy,
and are activated only when triggered by external events. Therefore, the
BS needs to manage the massive random access via detecting the active users before data transmission. 
The grant-based random access scheme has been widely applied to allow multiple users to access the network over limited radio resources, e.g., in 4G LTE networks \cite{hasan2013random}-\cite{arunabha2010fundamentals}. Under this scheme, each active device is randomly assigned a pilot sequence from a pre-defined set of preamble sequences to notify the BS of the device's activity state. A connection between an active device and the BS will be established if the pilot sequence of this device is not engaged by other devices. Besides the overhead caused by the pilot sequence, a major drawback of the grant-based random access scheme is the collision issue due to a massive number of devices \cite{liu2018sparse}. 

To avoid the excessive access latency due to the collision, a grant-free random access scheme has been proposed \cite{liu2018sparse}. Under this scheme, the active devices do not need to wait for any grant to access the network, and can directly transmit the payload data following the metadata to the BS. Following activity detection and channel estimation based on the pilot sequences, payload data of the active devices can be decoded.
The key idea of activity detection and data decoding under the sporadic pattern
is to connect with sparse signal processing and leverage the compressed sensing techniques \cite{chen2018sparse}. Compared with the grant-based access scheme \cite{liu2018sparse}, the grant-free random access paradigm enjoys a much lower access latency. In the scenario where the payload data only contains a few bits, e.g., sending an alarm signal, the efficiency can be further improved by embedding the data symbols in the signature sequences \cite{senel2018grant,8761672}. Nevertheless, with massive devices and massive BS antennas, the resulting high-dimensional detection problem brings formidable computational challenges, which motivates our investigation.
\subsection{Related Works}
We consider the grant-free massive random
access scheme in a network consisting of one multi-antenna BS and a massive number of devices with small data payloads, where each message is assigned a unique signature sequence.
By exploiting the sparsity structure in both the device activity state and data transmission, joint device activity detection and data decoding can be achieved by leveraging compressed sensing techniques \cite{liu2018massive1,chen2018sparse}. Recently, a covariance-based method has been proposed to improve the performance of device activity detection \cite{haghighatshoar2018improved}, where the detection problem is solved by a coordinate descent algorithm with random sampling, i.e., it randomly selects coordinate-wise iterate to update. This covariance-based method has also been applied for joint detection and data decoding \cite{8761672}. Furthermore, the phase transition analysis for covariance-based massive random access with massive MIMO has been provided in \cite{chen2019phase}.

Although coordinate descent is an effective algorithm to solve the maximum
likelihood estimation problem for joint activity detection and  data decoding \cite{8761672}, existing works adopted a random coordinate selection strategy, which yields a slow convergence rate.
Besides, a rigorous convergence rate analysis for this strategy has not yet been obtained. In this paper, our principle goal is to develop coordinate descent algorithms with more effective coordinate selection strategies for \emph{faster} activity and data detection in massive random access, supported by rigorous convergence rate analysis.


Coordinate descent algorithms \cite{wright2015coordinate} with various coordinate selection strategies have been widely applied to solve optimization  problems for which computing the gradient of the objective function is computationally prohibitive. It enjoys a low per-iteration complexity, as one or a few coordinates are updated in each iteration. In most previous works, e.g., \cite{shalev2013accelerated,shalev2013stochastic}, each coordinate is selected uniformly at random at each time step. Recent studies have proposed more advanced coordinate selection strategies via exploiting the structure of the data and sampling the coordinates from an appropriate non-uniform distribution, e.g., \cite{nutini2015coordinate}-\cite{salehi2018coordinate}, which outperform the random sampling strategy in the convergence rate.

Specifically, a convex optimization problem that minimizes a strongly convex objective function was considered in \cite{nutini2015coordinate}. It proposed a GaussSouthwell-Lipschitz rule that gives a faster
convergence rate than choosing random coordinates.
Subsequently,
Perekrestenko \textit{et al}. \cite{perekrestenko2017faster} improved convergence rates of the coordinate descent in an adaptive scheme on general convex objectives. Additionally, Zhao and Zhang \cite{zhao2015stochastic} developed an importance sampling rule where the sample distribution depends on the Lipschitz constants of the loss functions.
The adaptive sampling strategies in \cite{perekrestenko2017faster,zhao2015stochastic} require the full information of all the coordinates, which yields high computation complexity at each step. To address this issue, a recent study \cite{salehi2018coordinate}  exploited a bandit algorithm to learn a good approximation of the reward function, which characterizes how much the cost function decreases when the corresponding coordinate is updated.
The coordinate descent algorithms proposed in all the works mentioned above are to solve \emph{convex} optimization problems.
 Different from these works, the covariance-based estimation problem is \emph{non-convex}. Hence, efficient algorithms with new reward functions and corresponding theoretical analysis are required, which bring unique challenges.

\subsection{Contributions}
In this paper, we propose coordinate descent algorithms with effective coordinate sampling strategies for faster activity and data detection in massive random access. Specifically, we develop a novel algorithm, i.e., \emph{coordinate descent with Bernoulli sampling}. Inspired by \cite{salehi2018coordinate}, we cast the coordinate selection procedure as a multi-armed bandit (MAB) problem where a reward is received when selecting an arm (i.e., a coordinate), and we aim to maximize the cumulative rewards over iterations. At each iteration, with probability $\varepsilon$ the coordinate with the largest reward is selected, and otherwise the coordinate is chosen uniformly at random. We provide the convergence rate analysis on the coordinate descent with both Bernoulli sampling and random sampling in Theorem \ref{thm:main_bandit}, which theoretically validates the advantages of the proposed algorithm. While the algorithm and analysis in \cite{salehi2018coordinate} only considered convex objective functions, we extend them to the non-convex case. 

The value of $\varepsilon$ plays a vital role in the convergence rate and the computational cost. As demonstrated in Theorem \ref{thm:main_bandit}, the larger the value of $\varepsilon$ is, the higher profitability of selecting the coordinate endowed with the largest reward is. On the other hand, a larger value of $\varepsilon$ leads to a higher computational cost, since the rule of selecting the coordinate with the largest reward requires computing the rewards for all the coordinates. This motivates us to develop a more advanced algorithm called \emph{coordinate descent with Thompson sampling}, which adaptively adjusts the value of $\varepsilon$. In this algorithm, an inner MAB problem is established to learn the optimal value of $\varepsilon$, which is solved by a Thompson sampling algorithm. Theoretical analysis is provided to demonstrate that the logarithmic expected regret for the inner 
MAB problem can is achieved. 
Different from the analysis of Thompson sampling in previous works where the parameters of the beta distribution are required to be integers, i.e., \cite{agrawal2012analysis,scott2010modern}, our analysis applies to the beta distribution of which the parameters are in the more general and natural forms.

Simulation results show that the proposed algorithms enjoy faster convergence rates with lower time complexity than the state-of-the-art algorithm. It is also demonstrated that coordinate descent with Thompson sampling enables to further improve the convergence rate compared to coordinate descent with Bernoulli sampling.
Furthermore, we show that the proposed algorithm can be applied to faster activity and data detection in more general scenarios, i.e., with low precision (e.g., 1 -- 4 bits) analog-to-digital
converters (ADCs).

%
%
\section{System model and problem formulation}\label{sys}
In this section, we introduce the system model for massive random access, a.k.a., massive connectivity. A covariance-based formulation is then presented for joint device activity detection and data decoding, which is solved by a coordinated descent algorithm with random sampling.
\subsection{System Model}
Consider an IoT network consisting of one BS equipped with $M$ antennas and  $N$ single-antenna IoT devices. The channel state vector from device $i$ to the BS is denoted by\begin{align}\label{eq:hg}
g_i\bm{h}_i\in\mathbb{C}^{ M}, \quad i=1, \ldots, N,
\end{align} where $g_i$ is the pathloss component depending
on the device location, and $\bm{h}_i\in\mathbb{C}^{M}$ is the
Rayleigh fading component over multiple antennas that obeys
i.i.d. standard complex Gaussian distribution, i.e., $\bm{h}_i\sim\mathcal{CN}(\bm 0,\bm I)$. Due to the sporadic communications, only a few devices are active out of all devices at a given time instant \cite{wunder2015sparse}. For each active device, $J$ bits of data are transmitted, where $J$ is typically a small number. This is the case for many applications, e.g., sending an alarm signal requires only 1 bit. Our goal is to achieve the joint device activity detection and data detection.

Assume the channel coherence block endows with length $T_c$. The length of the signature sequences $L$ ($L<T_c$) is generally smaller than the number of devices, i.e., $L \ll N$, due to the massive number of devices and a limited channel coherence block \cite{8761672,wunder2015sparse}. We first define a unique signature sequence set for $N$ devices. For each device, we assign each $J$-bit message with a unique sequence. With $R:=2^{J}$, this sequence set is known at the BS:
\begin{align}
\bm{Q} = [\bm{Q}_1~ \cdots~\bm{Q}_N ]\in\mathbb{C}^{L\times NR},\label{eq:Q}
\end{align} 
where $\bm{Q}_i = [\bm{q}_i^1, \cdots, \bm{q}_i^R]\in\mathbb{C}^ {L\times R}$ with $\bm{q}_i^r = [{q}_{i}^r(1),\cdots,$ $ {q}_{i}^r(L)]^{\top}$ $\in\mathbb{C}^{L}$ for $i = 1,\cdots, N, r = 1,\cdots, R$. We assume that all the signature sequences are generated from i.i.d. standard complex Gaussian distribution, and are known to the BS. If the  $i$-th device is active and aims to send a certain data of $J$ bits, the $i$-th device will transmit the corresponding sequence from $\bm{Q}_i$. Specifically, the indicator $a_i^r$ that implies whether the $r$-th sequence of $i$-th device is transmitted is defined as follows: $a_i^r=1$ if the $i$-th device transmits the $r$-th sequence; otherwise, $a_i^r=0$. By detecting which sequences are transmitted
based on the received signal, i.e., estimating $\{a_i^r\}$, the BS
achieves joint activity detection and data decoding. In this way, the information bits are embedded in the transmitted sequence, and no extra payload data need to be transmitted, which is very efficient for transmitting a small number of bits \cite{senel2018grant}. Since at most one sequence is transmitted by each device, it holds that $\sum\nolimits_{r = 1}^R {a_i^r \in \left\{ {0,1} \right\}}$, where $\sum\nolimits_{r = 1}^R {a_i^r = 0}$ indicates that device $i$ is inactive; otherwise, it is active. 
The received signal $\bm{y}(\ell)\in\mathbb{C}^{ M}$ at the BS is represented as
\begin{equation}\label{eq:signal}
\bm{y}(\ell)=\sum_{i=1}^{N}\sum_{r =1}^R\bm{h}_i a_i^r q_i^r(\ell)+\bm n(\ell), 
\end{equation}
where $\bm{n}(\ell)\in\mathbb{C}^{ M}$ is the additive noise such that $\bm{n}(\ell)\sim\mathcal{CN}(\bm 0,\sigma_n^2\bm I)$ for all $\ell=1,\dots, L$. 

Compact the received signal over $M$ antennas as $\bm{Y}=[\bm{y}(1),\dots, \bm{y}(L)]^{\top}\in \mathbb{C}^{L\times M}$, and the additive noise signal over $M$ antennas as
\begin{align}
\bm{N}=[\bm{n}(1),\dots,
	\bm{n}(L)]\in\mathbb{C}^{L\times M}.\label{eq:N}
\end{align}
  The channel matrix is concatenated as \begin{align}
\bm{H}=[\bm{H}_1,\dots, \bm{H}_N]^{\top}\in\mathbb{C}^{NR\times M}\label{eq:H}
\end{align} with $\bm{H}_i = [\bm{h}_i,\cdots,\bm{h}_i]^{\top}\in\mathbb{C}^{R\times M}$ consisting of repeated rows for $n = 1,\cdots, N$. Recall the signature sequences defined in (\ref{eq:Q}), and then the model \eqref{eq:signal} can be reformulated as \cite{8761672}:
\begin{equation}\label{eq:sys_model}
\bm{Y}=\bm{Q}\bm{\Gamma}^{\frac{1}{2}}\bm{H}+\bm{N},
\end{equation}
where the diagonal block matrix is ${{\bm{\Gamma }}^{\frac{1}{2}}} \triangleq {\text{diag}}\left( {{{\bm{D}}_1}, \ldots ,{{\bm{D}}_N}} \right) \in {{\mathbb{C}}^{NR \times NR}}$ with $\bm{D}_i={\text{diag}}(a_i^1g_i,\dots,$ $ a_i^Rg_i)\in\mathbb{C}^{R
	\times R}$ being the diagonal activity matrix of the $i$-th device. Let $\bm{\gamma}=[\bm{\gamma}_1^{\top},\cdots,\bm{\gamma}_{N}^{\top}]^{\top}\in\mathbb{C}^{NR}$ denote the diagonal entries of $\bm{\Gamma}$, where $\bm{\gamma}_i = [(a_i^1g_i)^2,$ $\dots, (a_i^Rg_i)^2]^{\top}\in\mathbb{C}^{R}$ for $i = 1,\cdots,N$.
Our goal is to detect the values of indicators (i.e., $ \{a_i^r\}$) from the received matrix $\bm{Y}$ with the knowledge of the pre-defined sequence matrix $\bm{Q}$.

\subsection{Problem Analysis}
To achieve this goal, recent works have developed a compressed sensing based approach \cite{liu2018massive1,Yuanming_IoT2018device,liu2018massive2} which recovers $\bm{\Gamma}^{\frac{1}{2}}\bm{H}$ from $\bm{Y}$ via exploiting the group sparsity structure of $\bm{\Gamma}^{\frac{1}{2}}\bm{H}$. The indicator $a_i^r$ can then be 
determined from the rows of $\bm{\Gamma}^{\frac{1}{2}}\bm{H}$. However, such an approach
usually suffers an algorithmic complexity that is dominated by $M$ in massive IoT networks, i.e., the high dimension of $\bm{\Gamma}^{\frac{1}{2}}\bm{H}$. Furthermore, with messages embedded in the signature sequences, there is no need to estimate the channel state information \cite{8761672}, and thus recent papers \cite{8761672,haghighatshoar2018improved} have focused on directly detecting activity via estimating $\bm{\Gamma}$ instead.

Specifically, the estimation of $\bm{\Gamma}$ can be formulated as a maximum likelihood
estimation problem. Given $\bm{\gamma}$, each column of $\bm{Y}$, denoted as $\bm{y}_m\in\mathbb{C}^{L}$ for $1\leq m\leq M$, can be termed as an independent sample
from a multivariate complex Gaussian distribution such that \cite{8761672}:
\begin{align}\label{eq:distri}
{{\bm{y}}_m}\sim C{\mathcal{N}}\left( {{\bm{0}},\bm{\Sigma}} \right),
\end{align}
where $
\bm{\Sigma} = {\bm{Q}}{{\bm{\Gamma }}}{{\bm{Q}}^{\mathsf{H}}} + \sigma_n^2{\bm{I}_L}
$
with the identity matrix $\bm{I}_L\in\mathbb{R}^{L\times L}$. Based on (\ref{eq:distri}), the likelihood of $\bm{Y}$ given $\bm{\gamma}$ is represented as \cite{8761672}:
$
P({\bm{Y}}|\bm \gamma ) = \prod\limits_{m = 1}^M {\frac{1}{\mathrm{det}({\pi {\bm{\Sigma }}})}}$ $ \exp ( { - {\bm{y}}_m^{\mathsf{H}}{{\bm{\Sigma }}^{ - 1}}{{\bm{y}}_m}} )  = {{{{( \mathrm{det}({\pi {\bm{\Sigma }}}))}^{-M}}}}\exp ( { - {\mathrm{Tr}}( {{{\bm{\Sigma }}^{ - 1}}{\bm{Y}}{{\bm{Y}}^{\mathsf{H}}}} )} ),
$
where $\mathrm{det}(\cdot )$ and $\mathrm{Tr}(\cdot)$ are operators that return the determinant and the trace of a
matrix, respectively. Based on (\ref{eq:distri}), the maximum likelihood estimation problem can be formulated as minimizing $-\log P({\bm{Y}}|\gamma)$:
\begin{align}\label{eq:mle}\mathop{{\text{minimize}}}\limits_{\bm{\gamma }\in\mathbb{R}^{NR}} &\quad \log |{\bm{\Sigma }}| + \frac{1}{M}{\mathrm{Tr}}\left( {{{\bm{\Sigma }}^{ - 1}}{\bm{Y}}{{\bm{Y}}^{\mathsf{H}}}} \right)\notag \\  {\text{subject}}\;{\text{to}}&\quad {{\bm \gamma }} \geq 0,\notag\\ &\quad {{|| {{{{\bm \gamma }}_i}} ||}_0} \leq 1,\quad i = 1,2, \ldots ,N,\end{align}
where ${{\bm \gamma }} \geq 0$ means that each element of $\bm{\gamma}$ is greater or equal to $0$, and ${||\cdot||}_0$ denotes the $\ell_0$ norm.
This covariance-based approach was first proposed in \cite{haghighatshoar2018improved} for activity detection, and then extended to joint activity and data detection in \cite{8761672}. 
Based on the estimated $\hat{\bm{\gamma}}$ and a pre-defined threshold $s_{t h}$, the indicator can be determined by
\begin{equation}\label{eq:thresh}
a_{i}^{r}=\left\{\begin{array}{ll}{1,} & {\text { if } \quad \hat{\gamma}_{i}^{r} \geq s_{t h} \text { and } \hat{\gamma}_{i}^{r}=\max _{j=1}^{R}\{\hat{\gamma}_{i}^{j}\}}, \\ {0,} & {\text { else. }}\end{array}\right.
\end{equation}
From $a_i^r$ that indicates whether the $r$-th sequence is transmitted by the $i$-th device, the activity state of the $i$-th device and the transmitted data can be determined, i.e., achieving joint activity detection and data decoding.

For the ease of algorithm design, an alternative way to solve problem (\ref{eq:mle}) was developed in \cite{8761672}.  By eluding the absolute value constraints, it yields
\begin{align}
\label{eq:mle_r}
\mathop{{\text{minimize}}}\limits_{{{\bm\gamma }} \geq 0} &\quad F(\bm\gamma):=\log |{\bm{\Sigma }}| + \frac{1}{M}{\mathrm{Tr}}\left( {{{\bm{\Sigma }}^{ - 1}}{\bm{Y}}{{\bm{Y}}^{\mathsf{H}}}} \right).
\end{align}
The first term in (\ref{eq:mle_r}) is a concave function that makes the objective nonconvex, thereby bringing a unique challenge.
The paper \cite{8761672} showed that the estimator $\hat{\bm{\gamma}}$
of problem (\ref{eq:mle_r}) by coordinate descent is approximately
sparse, thus constraints $\quad {{|| {{{{\bm \gamma }}_i}} ||}_0} \leq 1,\forall i$ can be  approximately satisfied. Specifically, it demonstrated that as the sample size, i.e., $L$, increases, the estimator $\hat{\gamma}$ of problem (\ref{eq:mle_r}) concentrates around the ground truth $\bm{\gamma}^{\natural}$ and becomes an approximate sparse vector for large $M$, which
implies that constraints $\quad {{|| {{{{\bm \gamma }}_i}} ||}_0} \leq 1,\forall i$ are satisfied approximately when $M$ is large.
Motivated by its low per-iteration complexity, the papers \cite{8761672,haghighatshoar2018improved} developed a coordinate descent algorithm to solve the relaxed problem (\ref{eq:mle_r}), which updates the coordinate of $\bm{\gamma}$
randomly until convergence (illustrated in Algorithm \ref{tab:ML_coord}). However, such a simple coordinate update rule
\begin{algorithm}[htbp]
	\caption{CD-Random} 
	\label{tab:ML_coord} 
	{
		\begin{algorithmic}[1]
			\STATE {\bf{  Input}:} The sample covariance matrix $\widehat{\bm{\Sigma}}_{\bm{ y}}=\frac{1}{M} \bm{Y} \bm{Y}^{\mathsf{H}}$ of the $L \times M$ matrix $\bm{Y}$.
			\STATE {\bf{Initialize}:} $\bm{\Sigma}=\sigma_n^2 \bm{ I}_{L}$, $\bm{\gamma}={\bm{0}}$.
			
			\FORALL{ $t=1,2, \dots$}
			\STATE {Select an index $k \in [NR]$ corresponding to the $k$-th component of  $\bm\gamma$  randomly.}
%
			\STATE Let $\bm{a}_k$ denote the $k$-th column of $\bm{Q}\in\mathbb{C}^{L\times NR}$, and set $\delta=\max \left\{\frac{\bm{a}_k^{\mathsf{H}} \mathbf{\Sigma}^{-1} \widehat{\mathbf{\Sigma}}_{\bm{y}} \mathbf{\Sigma}^{-1} \bm{a}_k-\bm{a}_k^{\mathsf{H}} \mathbf{\Sigma}^{-1} \bm{a}_k}{\left(\bm{a}_k^{\mathsf{H}} \mathbf{\Sigma}^{-1} \bm{a}_k\right)^{2}},-\gamma_{k}\right\}$\label{5}
			
			\STATE Update $\gamma_{k} \leftarrow \gamma_{k}+ \delta$.
			\STATE Update $\bm{\Sigma} \leftarrow \bm{\Sigma}+ \delta (\bm{a}_{k} \bm a_{k}^{\mathsf{H}})$.\label{7} 
		
			\ENDFOR
			\STATE {\bf { Output}:} $\bm\gamma=[\gamma_1, \dots, \gamma_{ NR}]^{\top}$.
	\end{algorithmic}}
\end{algorithm}
yields a less aggressive convergence rate, and lacks rigorous convergence rate analysis with theoretical guarantees. In this paper, we aim to design a novel sampling strategy for coordinate descent to improve its convergence rate.

There have been lots of efforts in pushing the efficiency of coordinate descent algorithms by developing more sophisticated coordinate update rules. Concerning supervised learning problems, previous works \cite{perekrestenko2017faster,shalev2013stochastic} have demonstrated that the coordinate descent algorithm can yield better convergence guarantees when exploiting the structure of the data and sampling the coordinates from an appropriate non-uniform distribution. Furthermore, the paper \cite{salehi2018coordinate}
proposed a multi-armed
bandit based coordinate selection method that can be
applied to minimize convex objective functions, e.g., Lasso, logistic and ridge regression. Inspired by \cite{salehi2018coordinate}, we shall apply the idea of Bernoulli sampling to solve the estimation problem (\ref{eq:mle_r}) with a \emph{non-convex} objective function for joint activity and data detection. In the remainder of the paper, we first present a basic coordinate descent algorithm with Bernoulli sampling in Section \ref{sec:3}, followed by proposing a more efficient algorithm with Thompson sampling in Section \ref{sec:4}, both with rigorous analysis. Simulation results are provided in Section \ref{sec:5}.

\section{Coordinate Descent with Bernoulli Sampling}\label{sec:3}
In this section, a basic algorithm, coordinate descent with Bernoulli sampling, is developed. 
 We begin with introducing a reward function for each coordinate, which quantifies the decrease of the objective function $F(\bm\gamma)$ in (\ref{eq:mle_r}) by updating the corresponding coordinate. Based on the reward function, a coordinate descent algorithm with Bernoulli sampling (CD-Bernoulli) is proposed for joint device activity and data detection. The convergence rate of the proposed algorithm will be provided, and compared with that of coordinate descent with random sampling \cite{haghighatshoar2018improved}.

\subsection{Reward Function}
The coordinate selection strategy depends on the update rule for the decision variable
$\gamma_k$ for $k\in[NR]$. The update rule with respect to the $k$-th coordinate is denoted as $\mathcal{H}_k$, which is illustrated by Line 5-7 in Algorithm \ref{tab:ML_coord}. The following lemma quantifies the decrease of updating a coordinate
$k\in[NR]$ according to the update rule $\mathcal{ H}_k$, which is the reward function in our proposed algorithm and denoted as $r_k$.
\begin{lemma}\label{lemma:1}
	Considering problem (\ref{eq:mle_r}), and choosing the
	coordinate $k\in[NR]$ and updating $\gamma_k^t$ with the update rule $\mathcal{H}_k$, we have the following bound:
$
	F\left(\boldsymbol{\gamma}^{t+1}\right) \leq F\left(\bm{\gamma}^{t}\right)-r_{k}^{t},
$
	where 
	\begin{align}\label{eq:reward}
	r_{k}^{t} = \frac{\bm{a}_{k}^{\mathsf{H}} \boldsymbol{\Sigma}^{-1} \widehat{\boldsymbol{\Sigma}}_{\bm{y}} \boldsymbol{\Sigma}^{-1} \bm{a}_{k}}{1+\delta \bm{a}_{k}^{\mathsf{H}} \boldsymbol{\Sigma}^{-1} \bm{a}_{k}} \delta-\log\left(1+\delta \bm{a}_{k}^{\mathsf{H}} \boldsymbol{\Sigma}^{-1} \bm{a}_{k}\right).
	\end{align}
\end{lemma}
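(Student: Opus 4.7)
The plan is to show the bound by directly computing the exact one-step change $F(\bm{\gamma}^{t+1}) - F(\bm{\gamma}^t)$ induced by the rank-one update $\bm{\Sigma}^{t+1} = \bm{\Sigma}^t + \delta\, \bm{a}_k\bm{a}_k^{\mathsf{H}}$ to the covariance, and then observing that the stated inequality follows (in fact with equality). The computation splits naturally along the two terms of $F(\bm{\gamma})$ in \eqref{eq:mle_r}: the log-determinant and the data-fit trace.

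First, I would treat the log-determinant term using the matrix determinant lemma. Since $\bm{\Sigma}^{t+1}$ differs from $\bm{\Sigma}^t$ by the Hermitian rank-one perturbation $\delta\,\bm{a}_k\bm{a}_k^{\mathsf{H}}$, one has $|\bm{\Sigma}^{t+1}| = |\bm{\Sigma}^t|\bigl(1 + \delta\, \bm{a}_k^{\mathsf{H}}(\bm{\Sigma}^t)^{-1}\bm{a}_k\bigr)$, which gives immediately
\begin{equation*}
\log|\bm{\Sigma}^{t+1}| - \log|\bm{\Sigma}^t| = \log\!\left(1 + \delta\, \bm{a}_k^{\mathsf{H}}(\bm{\Sigma}^t)^{-1}\bm{a}_k\right).
\end{equation*}
Next, for the trace term I would apply the Sherman--Morrison formula, $(\bm{\Sigma}^{t+1})^{-1} = (\bm{\Sigma}^t)^{-1} - \delta (\bm{\Sigma}^t)^{-1}\bm{a}_k\bm{a}_k^{\mathsf{H}}(\bm{\Sigma}^t)^{-1} / (1 + \delta\, \bm{a}_k^{\mathsf{H}}(\bm{\Sigma}^t)^{-1}\bm{a}_k)$, and substitute into $\tfrac{1}{M}\mathrm{Tr}\bigl((\bm{\Sigma}^{t+1})^{-1}\bm{Y}\bm{Y}^{\mathsf{H}}\bigr)$. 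Using $\widehat{\bm{\Sigma}}_{\bm{y}} = \bm{Y}\bm{Y}^{\mathsf{H}}/M$ and cyclicity of the trace, the change in the trace term equals $-\delta\, \bm{a}_k^{\mathsf{H}}(\bm{\Sigma}^t)^{-1}\widehat{\bm{\Sigma}}_{\bm{y}}(\bm{\Sigma}^t)^{-1}\bm{a}_k / \bigl(1 + \delta\, \bm{a}_k^{\mathsf{H}}(\bm{\Sigma}^t)^{-1}\bm{a}_k\bigr)$.

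Summing the two contributions yields
\begin{equation*}
F(\bm{\gamma}^{t+1}) - F(\bm{\gamma}^t) = \log\!\left(1 + \delta\, \bm{a}_k^{\mathsf{H}}(\bm{\Sigma}^t)^{-1}\bm{a}_k\right) - \frac{\delta\, \bm{a}_k^{\mathsf{H}}(\bm{\Sigma}^t)^{-1}\widehat{\bm{\Sigma}}_{\bm{y}}(\bm{\Sigma}^t)^{-1}\bm{a}_k}{1 + \delta\, \bm{a}_k^{\mathsf{H}}(\bm{\Sigma}^t)^{-1}\bm{a}_k} = -\,r_k^t,
\end{equation*}
matching the definition \eqref{eq:reward}, and hence the bound $F(\bm{\gamma}^{t+1}) \le F(\bm{\gamma}^t) - r_k^t$ holds (with equality).

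There is no real analytic obstacle: the derivation is an exact rank-one update calculation. The only delicate points to verify carefully are (i) the correct denominator sign and placement after Sherman--Morrison, (ii) that the value of $\delta$ prescribed in Line 5 of Algorithm~\ref{tab:ML_coord} does not enter the derivation of the identity itself (it determines the sign and size of $r_k^t$, and in particular guarantees $\bm{\Sigma}^{t+1}\succ 0$ so that all inverses and logarithms are well defined), and (iii) that both $\bm{a}_k^{\mathsf{H}}(\bm{\Sigma}^t)^{-1}\bm{a}_k$ and $\bm{a}_k^{\mathsf{H}}(\bm{\Sigma}^t)^{-1}\widehat{\bm{\Sigma}}_{\bm{y}}(\bm{\Sigma}^t)^{-1}\bm{a}_k$ are nonnegative real scalars, which legitimizes the final expression as a real-valued reward.
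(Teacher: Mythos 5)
Your proposal is correct and follows essentially the same route as the paper's Appendix A: the paper's expression for $F_k(d)$ is exactly the result of the matrix determinant lemma and the Sherman--Morrison formula that you invoke explicitly, and both arguments conclude that the one-step change equals $-r_k^t$ so the stated bound holds with equality. The only cosmetic difference is that the paper names the minimizing $\delta$ by citing a prior work rather than deriving the rank-one identities by name, and your remark that the clipped $\delta=\max\{\cdot,-\gamma_k\}$ does not affect the validity of the identity is a worthwhile clarification the paper leaves implicit.
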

\begin{proof}
Please refer to Appendix \ref{sec:reward} for details.
\end{proof}
A greedy algorithm based on Lemma \ref{lemma:1} is to simply select at time $t$ the coordinate $k$ with the largest $r_k^t$ at time $t$. However, the cost of computing reward functions for all the $k\in [NR]$ is prohibitively high, especially with a large number of devices.
To address this issue, the paper \cite{salehi2018coordinate} adapted a principled approach using a bandit framework for learning the best $r_k^t$'s, instead of exactly computing all of them. Inspired by this idea, at each step $t$, we select a single coordinate $k$ and update it according to the rule $\mathcal{H}_k$. The reward function $r^t_k$ is computed and used as a feedback to adapt the coordinate selection strategy with Bernoulli sampling.
Thus, only partial information is available for coordinate selection, which reduces the computational complexity of each iteration. Details of the algorithm are provided in the following subsection.
\subsection{Algorithm and Analysis}
Consider a multi-armed bandit (MAB) problem where there are $NR$ arms (coordinates in our setting) from which a bandit algorithm can select for a reward, i.e., $r_k^t$ as in (\ref{eq:reward}) at time $t$. 
The MAB aims to maximize the cumulative reward received over $T$ rounds, i.e., $\sum_{t=1}^T r_{k_t}^t$, where $k_t$ is the arm (coordinate) chosen at time $t$. 
After the $t$-th round, the MAB only receives the reward of the selected arm (coordinate) $k_t$ which is used to adjust its arm (coordinate) selection strategy for the next round. For more background on the MAB problem, please refer to \cite{bubeck2012regret}.

Based on the MAB problem introduced above, the CD-Bernoulli algorithm is illustrated in Algorithm \ref{alg:Bandit}. \begin{algorithm}[tbp]
	\caption{CD-Bernoulli}\label{alg:Bandit}
	\begin{algorithmic}[1] 
		\STATE \textbf{Input: } $\varepsilon$ and $B$
		\STATE \textbf{Initialize: } $\bm{\Sigma}=\sigma_n^2 \bm{ I}_{L}$, $\bm{\gamma}={\bm{0}}$,
		  set $\bar{r}^0_k = r^0_k$ for all $k \in [NR]$.
		\FOR{$t=1$ {\bfseries to} $T$}
		\IF {$t \mod B == 0$}
		\STATE  set $\bar{r}_k^t = r^t_k$ for all $k \in [NR]$
		\ENDIF
		\STATE Generate $K \sim \mathrm{Bernoulli}(\varepsilon)$ 
		\IF {$K==1$}
	\STATE 	Select $k_t = \arg\max_{k\in [NR]} \bar{r}_{k}^{t}$
	
	\ELSE
	\STATE  Select $k_t \in [NR]$ uniformly at random
	\ENDIF
		\STATE	Update $\gamma_{k_t}^{t}$ according to the rule $\mathcal{H}_{k_t}$
		\STATE	Set $\bar{r}_{k_t}^{t+1} = r^{t+1}_{k_t}$ and $\bar{r}_{k}^{t+1} = \bar{r}_{k}^{t}$ for all $k\neq k_t$
		\ENDFOR
	\end{algorithmic}
\end{algorithm}
To address the computational complexity issue of the greedy algorithm that requires to compute the reward function $r^t_k$ for all $k\in [NR]$ at each round $t$, Algorithm \ref{alg:Bandit} only computes the reward function $r^t_k$ of all the coordinates $k\in [NR]$ every $B$ rounds (please refer to Line 4-6 in Algorithm \ref{alg:Bandit}). In the remaining rounds, $\bar{r}_k$ is estimated based on the most recently observed reward in the MAB. The coordinate selection policy is presented as follows: with probability $(1-\varepsilon)$ a coordinate $k_t \in [NR]$ is determined uniformly at random, while with probability $\varepsilon$ the coordinate endowed with the largest $\bar{r}_k^t$ is chosen. It mimics the $\epsilon$-greedy approach for conventional MAB problems \cite{bubeck2012regret}. This is to achieve a tradeoff between \emph{exploration} and \emph{exploitation}. That is, whether choosing the coordinate with currently the largest reward or exploring other coordinates. Then the $k_t$-th coordinate of $\bm{\gamma}$ is updated according to the update rule $\mathcal{H}_{k_t}$. The $k_t$-th entry of the estimated reward function is updated as $\bar{r}_{k_t}^{t+1} = r^{t+1}_{k_t}$ with the rest unchanged.

The following result shows the convergence rate of coordinate descent for joint activity and data detection with two different coordinate selection strategies, i.e., random sampling and Bernoulli sampling. The estimation error is defined as \begin{align}\label{epsilon}
\epsilon(\bm{\gamma}) = F(\bm{\gamma}) - F({\bm{\gamma}}^{\star})
\end{align} with ${\bm{\gamma}}^{\star}:= \text{argmin}_{\bm{\gamma} \in \mathbb{R}^{NR}}F(\bm{\gamma})$. In contrast to the previous work \cite{salehi2018coordinate}, which concerns the objective function consisting of a smooth convex function and a regularized convex function, this paper considers $F(\bm{\gamma})$ in (\ref{eq:mle_r}) that consists of a concave function and a convex function. Denote the best arm (coordinate) as $j_\star^t = \arg\max_{k \in [NR]} \bar{r}_k^t$ with the estimated reward $\bar{r}_k^t$ in Algorithm \ref{alg:Bandit}, we have the following convergence result.
\begin{theorem} \label{thm:main_bandit}
	Assume that at each iteration $t$, $\max_{k \in [NR]} r^t_k / r^t_{j_\star^t}$ $ \leq c(B,\varepsilon)$ for some constant $c$ that depends on $B$ and $\varepsilon$, then the iterate $\bm{\gamma}^t$ at the $t$-th iteration of the CD-Bernoulli algorithm (illustrated in Algorithm \ref{alg:Bandit}) for solving problem (\ref{eq:mle_r}) obeys
	\begin{align} 
		\mathbb{E}\left[\epsilon(\bm{\gamma}^{t}) \right] \leq  \frac{\alpha}{1+t-t_0},\label{eq:convergence_prop}
	\end{align}
	where
$
		\alpha^{-1}= \frac{1-\varepsilon}{(NR)^2c_1} + \frac{\varepsilon}{\eta^2c}$ with some constant $c_1>0$,	
for all $t\geq t_0 = \mathcal{O}(NR)$ and where  $\eta=\min_{k\in[NR]}	{\sum_{\ell}r_{\ell}^t}/{r_{k}^t}$ with $r_k^t$ defined in (\ref{eq:reward}). Furthermore, the CD-Random algorithm (illustrated in Algorithm \ref{tab:ML_coord}) for solving problem (\ref{eq:mle_r}) yields
	$
	\mathbb{E}\left[\epsilon(\bm{\gamma}^{t}) \right] \leq  \frac{  c_2(NR)^2}{NR+t},
$
	with some constant $c_2 >0$.
\end{theorem}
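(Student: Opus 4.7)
The plan is to convert the per-iteration progress inequality of Lemma~\ref{lemma:1} into a scalar quadratic recursion in the expected optimality gap $u_t=\mathbb{E}[\epsilon(\bm{\gamma}^t)]$ and then unroll it in the standard way. First I would apply Lemma~\ref{lemma:1} to obtain $\epsilon(\bm{\gamma}^{t+1})\leq\epsilon(\bm{\gamma}^t)-r_{k_t}^t$, then condition on $\bm{\gamma}^t$ and take expectation over the $\varepsilon$-greedy draw of $k_t$. With probability $1-\varepsilon$ the coordinate is uniform over $[NR]$, contributing $(1-\varepsilon)\sum_k r_k^t/(NR)$ in expectation; with probability $\varepsilon$ the greedy index $j_\star^t$ is chosen and, by the hypothesis $\max_k r_k^t\leq c(B,\varepsilon)\,r_{j_\star^t}^t$, it contributes at least $(\varepsilon/c(B,\varepsilon))\max_k r_k^t$. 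Hence
\begin{equation*}
\mathbb{E}[\epsilon(\bm{\gamma}^{t+1})\mid\bm{\gamma}^t]\leq\epsilon(\bm{\gamma}^t)-\frac{1-\varepsilon}{NR}\sum_{k=1}^{NR}r_k^t-\frac{\varepsilon}{c(B,\varepsilon)}\max_{k\in[NR]}r_k^t.
\end{equation*}

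The heart of the argument is then to lower bound the two reward terms by a quadratic in $\epsilon(\bm{\gamma}^t)$. Specifically, I would show that once $t\geq t_0=\mathcal{O}(NR)$ random-phase iterations have driven $\bm{\gamma}^t$ into a neighborhood of $\bm{\gamma}^\star$, the two inequalities $\sum_k r_k^t\geq\epsilon(\bm{\gamma}^t)^2/(NR\cdot c_1)$ and $\max_k r_k^t\geq\epsilon(\bm{\gamma}^t)^2/\eta^2$ both hold, with $\eta$ as in the statement. To get these, I would substitute the closed-form step $\delta_k^\star=(\bm{a}_k^{\mathsf{H}}\bm{\Sigma}^{-1}\widehat{\bm{\Sigma}}_{\bm{y}}\bm{\Sigma}^{-1}\bm{a}_k-\bm{a}_k^{\mathsf{H}}\bm{\Sigma}^{-1}\bm{a}_k)/(\bm{a}_k^{\mathsf{H}}\bm{\Sigma}^{-1}\bm{a}_k)^2$ into (\ref{eq:reward}) and Taylor-expand the logarithm, so that $r_k^t$ is, to leading order, a positive quadratic in the coordinate-wise stationarity residual; comparing a second-order expansion of $F$ around $\bm{\gamma}^\star$ to this sum then produces the first bound, and the identity $\max_k r_k^t=(\sum_\ell r_\ell^t)/\eta$ combined with an independent telescoping argument yields the second.

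Combining the two steps and taking total expectation,
\begin{equation*}
u_{t+1}\leq u_t-\left(\frac{1-\varepsilon}{(NR)^2 c_1}+\frac{\varepsilon}{\eta^2 c(B,\varepsilon)}\right)\mathbb{E}[\epsilon(\bm{\gamma}^t)^2]\leq u_t-\alpha^{-1}u_t^2,
\end{equation*}
where the last inequality is Jensen's. Inverting the scalar recursion $u_{t+1}\leq u_t-\alpha^{-1}u_t^2$ in the classical way gives $1/u_t\geq 1/u_{t_0}+\alpha^{-1}(t-t_0)$, and after absorbing $1/u_{t_0}$ into the delay the claimed rate $u_t\leq\alpha/(1+t-t_0)$ follows. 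The CD-Random statement is the $\varepsilon=0$ specialization: then $\alpha^{-1}=1/((NR)^2 c_1)$ and, collapsing $t_0$ and $c_1$ into a single constant $c_2$, one obtains $c_2(NR)^2/(NR+t)$.

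The main obstacle will be establishing the quadratic lower bound $\sum_k r_k^t\gtrsim\epsilon(\bm{\gamma}^t)^2$ without any convexity assumption on $F$. The analogous inequality in \cite{salehi2018coordinate} relies on strong convexity of a smooth-plus-regularizer composite, which is unavailable here because $\log|\bm{\Sigma}|$ is concave in $\bm{\gamma}$. My remedy is to exploit the explicit form (\ref{eq:reward}): the dominating quadratic factor $\bm{a}_k^{\mathsf{H}}\bm{\Sigma}^{-1}\widehat{\bm{\Sigma}}_{\bm{y}}\bm{\Sigma}^{-1}\bm{a}_k$ is uniformly positive on a neighborhood of $\bm{\gamma}^\star$ (on which $\bm{\Sigma}$ is well conditioned), so the sum of rewards behaves like a locally strongly convex quadratic even though the global objective is not. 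This localization is exactly what forces the burn-in time $t_0=\mathcal{O}(NR)$ to appear in the statement and represents the principal departure from the convex analysis in \cite{salehi2018coordinate}.
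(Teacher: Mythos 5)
Your proposal follows essentially the same route as the paper's proof: the $\varepsilon$-greedy expectation is split into a uniform part and a greedy part controlled by $c(B,\varepsilon)$ and $\eta$, the expected decrease is bounded below by $\epsilon^2(\bm{\gamma}^t)/\alpha$, Jensen's inequality converts this into the recursion $u_{t+1}\le u_t-u_t^2/\alpha$, and the recursion is unrolled (you by direct inversion of $1/u_t$, the paper by induction together with a contradiction argument for the base case $t=t_0$); the CD-Random bound is likewise obtained as the uniform-sampling specialization. The one substantive difference is the intermediate reward--gap inequality: where you propose to prove $\sum_k r_k^t\gtrsim\epsilon(\bm{\gamma}^t)^2$ directly via a local analysis of (\ref{eq:reward}) near $\bm{\gamma}^\star$, the paper instead asserts the linear bound $\epsilon(\bm{\gamma}^t)\le\sum_\ell r_\ell^t$ without proof and recovers the quadratic dependence from $\sum_\ell(r_\ell^t)^2\ge(\sum_\ell r_\ell^t)^2/(NR)$ and $(r_{k^\star}^t)^2\ge(\sum_\ell r_\ell^t)^2/\eta^2$ --- so the obstacle you correctly single out as the crux of the non-convex extension is present in the paper as well, merely concealed in an unproved assertion rather than confronted by a localization argument.
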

\begin{proof}
Please refer to Appendix \ref{proof_t1} for details.
\end{proof}
We conclude from Theorem \ref{thm:main_bandit} that by choosing proper values of $B$ and $\varepsilon$ (we use $B = NR/2$ and $\varepsilon = 0.6$ in the experiments of Section \ref{sec:simulaiton}) to yield sufficiently small $c(B,\varepsilon)$, the bound with respect to CD-Bernoulli approaches $\epsilon(\bm{\gamma}^t) = \mathcal{O}({\eta^2}/{t})$ with $\eta = \mathcal{O}(NR)$, which outperforms
the bound with respect to CD-Random, i.e.,  $\epsilon(\bm{\gamma}^t)= \mathcal{O}({(NR)^2}/{t})$. Hence, Theorem \ref{thm:main_bandit} demonstrates that for solving covariance-based joint device activity detection and data decoding, CD-Bernoulli yields a faster convergence rate than that with CD-Random.

In Algorithm \ref{alg:Bandit}, the value of $\varepsilon$ plays a vital role in the balance between exploitation and exploration. The larger the value of $\varepsilon$ is, the higher profitability of selecting the coordinate endowed with the largest current reward function $r_k^t$ (\ref{eq:reward}) at each iteration $t$ is. However, a larger value of $\varepsilon$ leads to insufficient exploration, which may lead to slow convergence rate. Instead of fixing $\varepsilon$, we prefer to developing a more flexible strategy for choosing $\varepsilon$. This motivates an improved algorithm to be presented in the next section.

\section{Coordiante Descent with Thompson sampling}\label{sec:4}
In this section, we improve the convergence rate of CD-Bernoulli Algorithm by incorporating another bandit problem to adaptively choose $\varepsilon$.  
Specifically, we formulate the choice of the parameter $\varepsilon$ as a general Bernoulli bandit problem, and develop a Thompson sampling algorithm for solving this bandit problem. The theoretical analysis is also presented to verify the advantage of Algorithm \ref{alg:Thompson} over Algorithm \ref{alg:Bandit}.

\subsection{A Stochastic MAP Problem for Choosing $\varepsilon$}
We first introduce a stochastic $q$-armed bandit problem for optimizing the parameter $\varepsilon$ in Algorithm \ref{alg:Bandit}.
In this paper, we assume that the reward distribution with respect to choosing $\varepsilon$ is \emph{Bernoulli}, i.e., the rewards are either $0$ or $1$. Note that the reward with respect to choosing $\varepsilon$ is different from the reward function of selecting the coordinates defined by (\ref{eq:reward}).

An algorithm for the MAB problem needs to decide which arm to play at each time step $t$, based on the outcomes of the previous $t-1$ plays. 
Let $\mu_i$ denote the (unknown) expected reward for arm $i$.
The means for the $q$-armed bandit problem, denoted as $\mu_1, \mu_2, \ldots, \mu_q$, are unknown, and are required to be learned by playing the corresponding arms.
A general way is to maximize the expected total reward by time $T$, i.e., $\Ex[\sum_{t=1}^{T} \mu_{i(t)}]$, where $i(t)$ is the arm played 
at step $t$, and the expectation is over the random choices of $i(t)$ made by the algorithm. 
The expected total \emph{regret} can be also represented as the loss that is generated due to not playing the optimal arm in each step. Let $\mu^{*}:=\max _{i} \mu_{i},$ and $d_{i}:=\mu^{*}-\mu_{i} .$ Also, let $k_{i}(t)$ denote the number
of times arm $i$ has been played up to step $t-1 .$ Then the expected total regret in time $T$ is given by \cite{agrawal2012analysis}
$
\mathbb{E}[\mathcal{R}(T)]=\mathbb{E}\left[\sum_{t=1}^{T}\left(\mu^{*}-\mu_{i(t)}\right)\right]=\sum_{i} d_{i} \cdot \mathbb{E}\left[k_{i}(T)\right].
$

\subsection{Thompson Sampling}
We first present some background on the Thompson sampling algorithm for the Bernoulli bandit problem, i.e., when the rewards are either $0$ or $1$, and for arm $i$ the probability of success (reward =$1$) is $\mu_i$. More details on Thompson sampling can be found in \cite{chapelle2011empirical} and \cite{agrawal2012analysis}. 

It is convenient to adopt Beta distribution as the Bayesian priors on the Bernoulli means $\mu_i$'s. 
Specifically, the probability density function (pdf) of $\Beta(\alpha, \beta)$, i.e., the beta distribution with parameters $\alpha > 0$, $\beta > 0$, is given by $f(x; \alpha, \beta) = \frac{\Gamma(\alpha+\beta)}{\Gamma(\alpha)\Gamma(\beta)}x^{\alpha-1}(1-x)^{\beta-1}$ with $\Gamma(\cdot)$ being the gamma function. If the prior is a $\Beta(\alpha, \beta)$ distribution, then based on a Bernoulli trial, the posterior distribution can be represented as $\Beta(\alpha+1, \beta)$ when the trail leads to a success; otherwise, it is updated as $\Beta(\alpha, \beta+1)$. 

The previous studies of Thompson sampling algorithm, e.g., \cite{agrawal2012analysis}, generally assumed that $\alpha$ and $\beta$ are integers. The algorithm initially assumes that arm $i$ has prior as $\Beta(1, 1)$ on $\mu_i$, which is
natural because $\Beta(1,1)$ is the uniform distribution on the interval $(0,1)$. At time $t$, having observed $S_i(t)$ successes (reward = $1$) and $F_i(t)$  failures (reward = $0$) in $k_i(t) = S_i(t)+F_i(t)$ plays of arm $i$, the algorithm updates the distribution on $\mu_i$ as $\Beta(S_i(t)+1, F_i(t)+1)$. 
The algorithm then samples from these posterior distributions of the $\mu_i$'s, and plays an arm according to the probability of its mean being the largest.

Different from previous methods, in this paper, we consider a more general way to update the parameters $\alpha$ and $\beta$ by evaluating the reward function $r_k^t$, to be presented in the following subsection.

\subsection{CD-Thompson}
\begin{algorithm}[tbp]
	\caption{CD-Thompson}\label{alg:Thompson}
	\begin{algorithmic}[1] 
		\STATE \textbf{Input: }  $E$. 
		\STATE \textbf{Initialize: }$\bm{\Sigma}=\sigma_n^2 \bm{ I}_{L}$, $\bm{\gamma}={\bm{0}}$,\\   set $\bar{r}^0_k = r^0_k$ for all $k \in [NR]$,
		\\the TS parameters $\bm{\alpha} = [\alpha_1,\cdots,\alpha_q]$ and with $\bm{\beta } = [ \beta_1,\cdots, \beta_q]$ some integer $q$.
		\FOR{$t=1$ {\bfseries to} $T$}
		\IF {$t \mod E == 0$}
		\STATE  set $\bar{r}_k^t = r^t_k$ for all $k \in [NR]$
		\ENDIF
		
		\STATE   For each arm $i=1,\cdots, q$, sample  ${{\nu}_{i}^t}\sim \mathrm{Beta}(\alpha_i,{ \beta}_i)$\label{L7}
		\STATE $j_t = \arg\max_{i}({{\nu}_{i}^t})$
		
		\STATE Generate $K \sim \mathrm{Bernoulli}({\nu}_{j_t}^t)$ 
		\IF {$K==1$}
		\STATE 	Select $k_t = \arg\max_{k\in [NR]} \bar{r}_{k}^{t}$
		\STATE  Compute $ \kappa_{k_t}^t = r_{k_t}^t/F(\bm{\gamma}^t)$ based on (\ref{eq:reward}).
		\STATE Update $\alpha_{j_t} = \alpha_{j_t} +{\nu}_{j_t}^t\cdot\kappa_{k_t}^t$
		
		\ELSE
		\STATE  Select $k_t \in [NR]$ uniformly at random
		
		\STATE  Compute $ \kappa_{k_t}^t=r_{k_t}^t/F(\bm{\gamma}^t)$ based on (\ref{eq:reward}).
		\STATE Update $ \beta_{j_t} =  \beta_{j_t} + (1-{\nu}_{j_t}^t)\kappa_{k_t}^t$
		
		\ENDIF\label{L18}
		
		\STATE	Update $\gamma_{k_t}^{t}$ according to the rule $\mathcal{H}_{k_t}$
		\STATE	Set $\bar{r}_{k_t}^{t+1} = r^{t+1}_{k_t}$ and $\bar{r}_{k}^{t+1} = \bar{r}_{k}^{t}$ for all $k\neq k_t$
		
		\ENDFOR
	\end{algorithmic}
\end{algorithm}
The coordinate descent algorithm via Thompson sampling (CD-Thompson) is illustrated in Algorithm \ref{alg:Thompson}. In this algorithm, a stochastic MAB problem for learning the best ${\nu}_{i}^t$ for arms $i = 1,\cdots,q$ at the $t$-th iteration is established, and a Thompson sampling algorithm is developed to solve this bandit problem. In Algorithm \ref{alg:Thompson}, the reward $r_k^t$ for selecting the $k$-th coordinate at time step $t$ is taken into consideration to update the parameters $\bm{\alpha}=[\alpha_1,\cdots,\alpha_q],\bm{\beta} = [ \beta_1,\cdots, \beta_q]$, thereby choosing ${{\nu}_{i}^t}$ based on ${{\nu}_{i}^t}\sim \mathrm{Beta}(\alpha_i,{ \beta}_i)$.
To be specific, for the index $j_t = \arg\max_{i}({{\nu}_{i}^t})$ and the Bernoulli variable $K\sim \mathrm{Bernoulli}({\nu}_{j_t}^t)$, if $K = 1$, we update 
\begin{align}\label{eq:alpha}
\alpha_{j_t} = \alpha_{j_t} +{\nu}_{j_t}^t\cdot r_{k_t}^t/F(\bm{\gamma}^t);
\end{align}
otherwise, we update 
\begin{align}\label{eq:beta}
\beta_{j_t} =  \beta_{j_t} + (1-{\nu}_{j_t}^t)r_{k_t}^t/F(\bm{\gamma}^t),
\end{align}
 where $r_{k_t}^t$ is defined in (\ref{eq:reward}) and $ F(\bm{\gamma}^t)$ is defined in (\ref{eq:mle_r}). For illustration, the main processes of CD-Bernoulli and CD-Thompson are illustrate in Fig. \ref{fig_alg}.
\begin{figure}[htbp]
	\centering
	\includegraphics[width=\columnwidth]{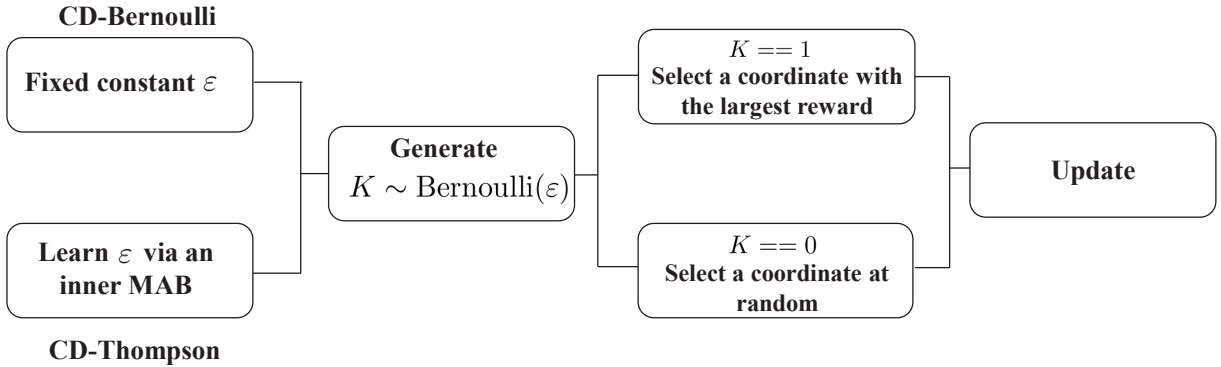}
	\caption{The main processes of CD-Bernoulli and CD-Thompson.}
	\label{fig_alg}
\end{figure}

 Recall that $\mu_i$ denotes the (unknown) expected reward for arm $i$. 
 At time $t$, if arm $i$ has been played a sufficient number of times, ${\nu}_{i}^t$ is tightly concentrated around $\mu_i$ with high probability. In the following analysis, we assume that the first arm is the unique optimal arm, i.e., $\mu_1 = \arg\max_{i\neq 1}\mu_i$. The expected regret for the stochastic MAB problem in Algorithm \ref{alg:Thompson} is presented as follows.
\begin{theorem}
	\label{th:N-arms}
The $q$-armed stochastic bandit problem for choosing ${\nu}_{i}^T$ for $i = 1,\cdots, q$ in Algorithm \ref{alg:Thompson} has an expected regret as \vspace{-0.1in}
	$\mathbb{E}[{\cal R}(T)] \leq \mathcal{O}\left(\left(\sum_{b=2}^q\frac{1}{d_b^2}\right)^2 \ln T\right)  \vspace{-0.1in}$
	in time $T$, where $d_i=\mu_1- \mu_i$.
\end{theorem}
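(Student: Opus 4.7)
My plan is to follow the general structure of the Thompson-sampling regret analysis of \cite{agrawal2012analysis}, while carefully adapting every step in which the standard argument uses the fact that the Beta posteriors have integer parameters. This adaptation is needed because Algorithm \ref{alg:Thompson} updates $\alpha_i$ and $\beta_i$ by the fractional increments $\nu_{j_t}^t \kappa_{k_t}^t$ and $(1-\nu_{j_t}^t)\kappa_{k_t}^t$, where $\kappa_{k_t}^t = r_{k_t}^t/F(\bm{\gamma}^t) \in [0,1]$ after the normalization in (\ref{eq:alpha})--(\ref{eq:beta}). The starting point is the standard decomposition
\[
\mathbb{E}[\mathcal{R}(T)] \;=\; \sum_{i=2}^{q} d_i\, \mathbb{E}[k_i(T)],
\]
so it suffices to bound $\mathbb{E}[k_i(T)]$ for every suboptimal arm $i\neq 1$. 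For each such arm I will choose thresholds $x_i$, $y_i$ with $\mu_i < x_i < y_i < \mu_1$ (taking the three gaps proportional to $d_i$), define the empirical-concentration event $E_i^{\mu}(t) = \{\alpha_i(t)/(\alpha_i(t)+\beta_i(t)) \leq x_i\}$ and the posterior-concentration event $E_i^{\nu}(t) = \{\nu_i^t \leq y_i\}$, and split
\[
\mathbb{E}[k_i(T)] \leq \sum_{t=1}^T \Pr(i_t{=}i, E_i^{\mu}(t), E_i^{\nu}(t)) + \sum_t \Pr(i_t{=}i, \neg E_i^{\mu}(t)) + \sum_t \Pr(i_t{=}i, E_i^{\mu}(t), \neg E_i^{\nu}(t)).
\]

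The three summands are handled in sequence. The second is a concentration-of-empirical-mean issue: although the posterior mean $\alpha_i/(\alpha_i+\beta_i)$ is no longer a classical binomial proportion, the increments are bounded in $[0,1]$, so a Hoeffding-type inequality applied to the martingale-difference sequence $\nu_{j_t}^t\kappa_{k_t}^t - \mathbb{E}[\nu_{j_t}^t\kappa_{k_t}^t\mid\mathcal{F}_{t-1}]$ still delivers an exponentially small tail, contributing at most a constant when summed over $t$. The third summand requires an upper tail bound on $\Beta(\alpha_i,\beta_i)$ with non-integer $\alpha_i,\beta_i$; the Beta-Binomial identity exploited in \cite{agrawal2012analysis} is unavailable, so I plan to bound the regularized incomplete beta function directly via a Chernoff argument combined with the stochastic ordering $\Beta(\alpha_i,\beta_i)\preceq \Beta(\lceil\alpha_i\rceil,\lfloor\beta_i\rfloor)$, reducing to the integer case up to Stirling correction factors.

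The main obstacle will be the first summand, which measures how often the algorithm plays a suboptimal arm even when both its posterior mean and its sample are small. The standard key inequality,
\[
\Pr(i_t = i, E_i^{\mu}, E_i^{\nu}\mid\mathcal{F}_{t-1}) \leq \frac{1-p_{1,t}}{p_{1,t}}\,\Pr(i_t = 1, E_i^{\mu}, E_i^{\nu}\mid\mathcal{F}_{t-1})
\]
with $p_{1,t} = \Pr(\nu_1^t > y_i\mid\mathcal{F}_{t-1})$, requires a \emph{lower} tail bound for $\Beta(\alpha_1,\beta_1)$ with fractional parameters. I intend to obtain this by expressing the Beta density via Gamma functions and estimating $\Gamma(\alpha_1+\beta_1)/(\Gamma(\alpha_1)\Gamma(\beta_1))$ through a Stirling expansion, so that the non-integer parameters enter only through smooth multiplicative corrections; summing $(1-p_{1,t})/p_{1,t}$ over $t$ then contributes the characteristic $\mathcal{O}(\log T / d_i^2)$ factor, exactly as in the classical analysis.

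Finally, the extra factor $\sum_{b=2}^q 1/d_b^2$ that enlarges our bound compared with the classical $\sum_i 1/d_i\cdot \ln T$ rate stems from the coupling introduced by the shared fractional updates: the effective sample size $\alpha_1+\beta_1$ of the optimal arm grows only when $j_t = 1$, an event whose probability is itself controlled by the current per-arm regret bound, i.e.\ by $\sum_b 1/d_b^2$. Plugging this recursive estimate into the per-arm bound inflates $\mathbb{E}[k_i(T)]$ to $\mathcal{O}\!\bigl(\ln T \cdot (\sum_b 1/d_b^2)/d_i^2\bigr)$; multiplying by $d_i$ and summing over $i$ then yields the stated $\mathcal{O}\!\bigl(\bigl(\sum_{b=2}^q 1/d_b^2\bigr)^2 \ln T\bigr)$ regret.
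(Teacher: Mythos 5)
Your plan follows a genuinely different decomposition from the paper's. The paper adopts the structure of the original $q$-armed analysis in \cite{agrawal2012analysis}: it splits suboptimal arms into \emph{saturated} arms (played at least $L_i = c_L(\ln T)/d_i^2$ times) and \emph{unsaturated} ones, charges the unsaturated arms $\sum_u L_u d_u$ directly, and bounds the saturated-arm regret by counting the steps between consecutive plays of the optimal arm via geometric random variables $X(j,s_j,y_i)$ whose success probabilities are Beta tail probabilities with the fractional parameters $s+R^j$, $j-s+R^j$. The $\bigl(\sum_b 1/d_b^2\bigr)^2$ factor then arises \emph{mechanically} as the product of two sums each of size $\sum_i L_i$: one from the range of the index $j$ and one from the outer multiplicative factor in the saturated-arm bound. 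You instead use the per-arm three-event split ($E_i^{\mu}$, $E_i^{\nu}$, and the key inequality relating $\Pr(i_t=i)$ to $\Pr(i_t=1)$ through $(1-p_{1,t})/p_{1,t}$), which is the structure of the later, sharper Thompson-sampling analyses. Your treatment of the non-integer Beta parameters (martingale Hoeffding for the posterior mean, stochastic ordering $\Beta(\alpha,\beta)\preceq\Beta(\lceil\alpha\rceil,\lfloor\beta\rfloor)$ for the upper tail, Stirling estimates for the lower tail of arm 1) is a reasonable and arguably more explicit substitute for the Beta--Binomial identity than what the paper offers.

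The genuine gap is in your final paragraph. Carried out faithfully, the three-event decomposition yields a per-arm bound of the form $\mathbb{E}[k_i(T)] = \mathcal{O}(\ln T/d_i^2)$ plus constants, hence a regret of order $\sum_i (\ln T)/d_i$ --- it does not naturally produce the $\bigl(\sum_b 1/d_b^2\bigr)^2$ factor. To match the theorem you introduce a ``recursive coupling'' argument: the growth of $\alpha_1+\beta_1$ is gated by the event $j_t=1$, whose probability you claim is ``controlled by the current per-arm regret bound $\sum_b 1/d_b^2$,'' and you then ``plug this recursive estimate'' back in. This step is not justified: you neither formulate the recursion precisely nor show it has the claimed fixed point, and the quantity $\sum_b 1/d_b^2$ is a cumulative count over the whole horizon, not a per-step probability, so it cannot simply multiply a per-arm bound. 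If your sharper route actually goes through, the correct conclusion is a \emph{stronger} bound that implies the theorem, and you should say so rather than inflate it; if it does not go through (e.g., because the fractional increments $\kappa_{k_t}^t$ can be arbitrarily small, so $\alpha_1+\beta_1$ grows sublinearly in the number of plays of arm 1 and the sum of $\mathbb{E}[(1-p_{1,t})/p_{1,t}]$ over those plays is no longer controlled by the integer-parameter estimates), then the inflation does not rescue it. Either way, the last step as written is not a proof of the stated bound; you need either to complete the sharp analysis and deduce the theorem a fortiori, or to switch to the paper's saturated/unsaturated accounting, where the square appears for a concrete combinatorial reason.
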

\begin{proof}
Please refer to Appendix \ref{sec:proof_bandit} for a brief summary of the proof.
\end{proof}

\begin{remark}
 Algorithm \ref{alg:Bandit} adopts a fixed constant $\varepsilon>0$ as the probability of updating coordinate $k$ with the largest reward function (i.e., coordinate-wise descent value) $r_k^t$ (\ref{eq:reward}) at time step $t$, which lacks flexibility for better exploration-exploitation trade-off. In contrast, Algorithm \ref{alg:Thompson} improves the strategy of choosing the parameter $\varepsilon$ in Algorithm \ref{alg:Bandit}. This is achieved by establishing a stochastic $q$-armed bandit problem for choosing the corresponding probability. 
This multi-armed stochastic bandit problem studies an exploitation/exploration trade-off by sequentially designing ${\nu}_{i}^T$ for $i = 1,\cdots, q$ at time step $t$. During the sequential decision, Algorithm \ref{alg:Thompson} is able to approximate the optimal value of the probability. Theoretically, Theorem \ref{th:N-arms}	demonstrates that Algorithm \ref{alg:Thompson} enjoys a logarithmic expected regret for the stochastic $q$-armed bandit problem, which typically is the best to expect. Furthermore, the exploitation/exploration trade-off in Algorithm \ref{alg:Thompson} eludes the situation where the large value of ${\nu}_{i}^t$ in Algorithm \ref{alg:Thompson} is maintained in many time steps, and thus avoids high computational cost for computing $r_k^t$ for all $k\in[NR]$ at time step $t$.
\end{remark}
\begin{remark}
	Different from the previous MAB based coordinate descent algorithm \cite{salehi2018coordinate} that solves \emph{convex} optimization problems, our proposed algorithm solves a covariance-based estimation problem that is \emph{non-convex}.  Beta distribution, i.e., $\Beta(\alpha, \beta)$, is a powerful tool to learn 
	the priors for Bernoulli rewards. 
	Specifically, we consider a more general way to update the parameters $\alpha$ and $\beta$ based on the reward function $r_k^t$. Our proposed algorithms turn out to be enjoying faster convergence rates with modest computational time complexity.
\end{remark}
\section{Application to Massive Connectivity with Low-precision ADCs}
While the formulation in Section \ref{sys} presents a basic massive connectivity system, the proposed algorithms, i.e., CD-Bernoulli and CD-Thompson, can also be applied to solve more general activity detection problems. In this section, we introduce massive connectivity with low-precision analog-to-digital converters (ADCs) as an example. Recently, the use of low precision (e.g., 1--4 bits) ADCs in massive MIMO systems has been proposed to reduce cost and power
consumption \cite{mo2017channel,7894211,wen2015bayes}. In the following, we illustrate how the proposed algorithms can be applied to this new scenario.

At each of receive antennas, the A/D converter samples the
received signal and utilizes a finite number of bits to represent
corresponding samples. 
Each entry, i.e., $Y_{i j}$, of $\bm{Y}$ (\ref{eq:sys_model}) for $1 \leq i \leq L, 1 \leq j \leq M$ is quantized into a finite set of pre-defined values by a $b$-bit
quantizer $\mathrm{Q}_{c}$. The quantized received signal is thus represented by \cite{wen2015bayes}
\begin{align}
{\bm{Y}}_{\mathrm{q}}=\mathrm{Q}_{c}(\bm{Y}) = \mathrm{Q}_c(\bm{Q}\bm{\Gamma}^{\frac{1}{2}}\bm{H}+\bm{N}),
\end{align}
where the complex-valued quantizer $\mathrm{Q}_{c}(\cdot)$ is defined
as $X_{\mathrm{q}}=\mathrm{Q}_{c}\left(X\right) \triangleq \mathrm{Q}\left(\operatorname{Re}\left\{X\right\}\right)+\mathrm{i} \mathrm{Q}\left(\operatorname{Im}\left\{X\right\}\right),$ i.e., the
real and imaginary parts are quantized separately. The real
valued quantizer $\mathrm{Q}$ maps a real-valued input to one of the $2^{b}$ bins, which are characterized by the set of $2^{{b}}-1$ thresholds
$\left[r_{1}, r_{2}, \ldots, r_{2^{b}-1}\right],$ such that $-\infty<r_{1}<r_{2}<r_{2}<\cdots<\infty .$ For $z=1, \ldots, 2^{b}-1$, an element of the output ${\bm{Y}}_{\mathrm{q}}$ is assigned a value in $\left(r_{z-1}, r_{z}\right]$
when the quantizer entry of the input $\bm{Y}$ falls in the $z$-th bin, i.e., the interval $\left(r_{z-1}, r_{z}\right]$.

Generally, the quantization operation is nonlinear. For ease of applying coordinate descent algorithms to solve quantized model, we linearize the quantizer. Based on Bussgang's theorem, the quantizer output ${\bm{Y}}_{\mathrm{q}}$
can be decomposed into a signal component plus a distortion
$\bm{W}_{\mathrm{q}}\in\mathbb{C}^{L\times M}$ that is uncorrelated with the signal component $\bm{Y}$ \cite{mo2017channel}, i.e.,
\begin{align}\label{eq:q}
{\bm{Y}}_{\mathrm{q}} = \left(\bm{I}_{M}-\bm{\rho}\right)\bm{Y} + \bm{W}_{\mathrm{q}},
\end{align}
where $\bm{\rho}$ is the real-valued
diagonal matrix containing the $M$ distortion factors:
\begin{equation}
\bm{\rho}=\left[\begin{array}{ccc}{\rho_{1}} & {} & {} \\ {} & {\ddots} & {} \\ {} & {} & {\rho_{M}}\end{array}\right] \approx\left[\begin{array}{ccc}{2^{-2 b_{1}}} & {} & {} \\ {} & {\ddots} & {} \\ {} & {} & {2^{-2 b_{M}}}\end{array}\right],
\end{equation}
with $b_j$ for $j= 1,\cdots, M$ denoting the bit resolution of the scalar quantizer with respect to each antenna.

Since $\bm{W}_{\mathrm{q}}$ is uncorrelated with the signal component $\bm{Y}$, the covariance matrix of the quantizer can be represented as
\begin{equation}
{\bm{\Sigma}}_{\mathrm{q}}=\mathrm{E}\left[{\bm{Y}}_{\mathrm{q}} {\bm{Y}}_{\mathrm{q}}^{\mathsf{H}}\right]=\bm{\rho} \bm{\Sigma} \bm{\rho}+\bm{\rho}\left(\bm{I}_{M}-\bm{\rho}\right) \operatorname{diag}\left(\bm{\Sigma}\right),
\end{equation}
where $\bm{\Sigma}$ is defined in (\ref{eq:distri}). Hence, the joint device activity detection and data decoding with low-precision ADCs can be formulated as
\begin{align}
\label{eq:mleadc}
\mathop{{\text{minimize}}}\limits_{{{\bm\gamma }} \geq 0} &\quad F(\bm\gamma):=\log |{\bm{\Sigma }_{\mathrm{q}}}| + \frac{1}{M}{\mathrm{Tr}}\left( {{{\bm{\Sigma }_{\mathrm{q}}}^{ - 1}}{\bm{Y}_{\mathrm{q}}}{{\bm{Y}}_{\mathrm{q}}^{\mathsf{H}}}} \right).
\end{align}
Problem (\ref{eq:mleadc}) can be efficiently solved by the proposed algorithms, i.e., Algorithm \ref{alg:Bandit} and Algorithm \ref{alg:Thompson}. Simulations will be presented in the next section. 
\section{Simulation Results}\label{sec:simulaiton}\label{sec:5}
In this section, we provide simulation results to demonstrate that the proposed algorithms enjoy faster convergence rates than coordinate descent with random sampling for joint device activity detection and data decoding. Furthermore, we apply our proposed algorithms to massive connectivity with low-precision ADCs.
\subsection{Simulation Settings and Performance Metric} 
Consider a single cell of radius $1000$m containing $N =
1500$ devices, among which $K = 50$ devices are active. 
The performance is characterized by the probability of missed detection. 

The simulation settings are given as follows:
\begin{itemize}
	\item The signature matrix $\bm{Q}\in\mathbb{C}^{L\times NR}$ (\ref{eq:Q}) with $R =2^J$ is generated from i.i.d. standard complex Gaussian distribution, followed by normalization, i.e., 
	\[
	\bm{Q}\sim \mathcal{N}(\bm{0},\frac{1}{2L}\bm{I}_{L})+\mathrm{i}\mathcal{N}(\bm{0},\frac{1}{2L}\bm{I}_{L}).
	\]
	
	\item The channel matrix $\bm{H}\in\mathbb{C}^{NR\times M}$ consists of Rayleigh fading components that follow i.i.d. standard complex Gaussian distribution, i.e.,
	\[
	\bm{H}\sim \mathcal{N}(\bm{0},\frac{1}{2}\bm{I}_{NR})+\mathrm{i}\mathcal{N}(\bm{0},\frac{1}{2}\bm{I}_{NR}).
	\] Meanwhile, the fading component $g_i$ in (\ref{eq:hg}) for device $i$ with $i = 1,\cdots, N$ is given as $g_i = -128.1-37.6\log_{10}(d_i)$ in dB where $d_i = 1000$ for all $\forall i\in[N]$.
	\item The additive noise matrix $\bm{N}\in\mathbb{C}^{L\times M}$ is generated from i.i.d. complex Gaussian distribution, i.e.,
		\[
	\bm{N}\sim \mathcal{N}(\bm{0},\frac{1}{2\sigma_n^2}\bm{I}_{L})+\mathrm{i}\mathcal{N}(\bm{0},\frac{1}{2\sigma_n^2}\bm{I}_{L}),
	\]where the variance $\sigma_n^2$ is the background noise power normalized by the device transmit
	power.
In the simulations, the background noise power is set as -$99$ dBm,
	and the transmit power of each device is set as $40$ dBm.
	\item Performance metric is defined in the following. The missed detection occurs when a
	device is active but is detected to be inactive, or a device
	is active and is detected to be active but the data decoding is incorrect.
	Different probabilities of missed detection can
	be obtained by adjusting the value of the threshold $s_{t h}$ in (\ref{eq:thresh}). In the simulations, we choose a threshold $s_{t h}$ that enables to determine $50$ active devices from the estimated $\hat{\bm{\gamma}}$.
	
\end{itemize}

The following three algorithms are compared:
\begin{itemize}
	\item \textbf{Proposed coordinate descent with Bernoulli sampling (CD-Bernoulli)}: Problem (\ref{eq:mle_r}) is solved by Algorithm \ref{alg:Bandit} with the setting of $B = NR/2$ and $\varepsilon = 0.6$. Note that the computational time will increase as the value of $\varepsilon$ increases. The convergence rate of CD-Bernoulli will decrease as the value of $\varepsilon$ decreases. We thus pick a modest value to illustrate the performance of CD-Bernoulli.
	\item \textbf{Proposed coordinate descent with Thompson sampling (CD-Thompson)}: Problem (\ref{eq:mle_r}) is solved by Algorithm \ref{alg:Thompson} with the setting of $B = NR/2$ and $q=10$.
	\item \textbf{Coordinate descent with random sampling (CD-Random)}:  Problem (\ref{eq:mle_r}) is solved by Algorithm \ref{tab:ML_coord} with uniformly randomly choosing a coordinate to update.
\end{itemize}

All the algorithms stop when the relative change of the objective function $F(\bm{\gamma}^t)$ is lower than a certain level, i.e.,
\[\frac{|F(\bm{\gamma}^{t+1}) - F(\bm{\gamma}^t)|}{|F(\bm{\gamma}^t)|}\leq 10^{-6}
\] or the number of iterations exceeds $1500$.
\subsection{Convergence Rate}\label{sec:con}
In the simulations, the length of the signature sequences is $L=300$, the number of antenna is $M = 16$, and each device transmits a message of $J=1$ bit or $J=2$ bits.
The convergence rates of different algorithms are illustrated in Fig. \ref{fig1}.
We validate the convergence rate analysis in Theorem \ref{thm:main_bandit} by comparing CD-Bernoulli (i.e., Algorithm \ref{alg:Bandit}) with CD-Random (i.e., Algorithm \ref{tab:ML_coord}). Furthermore, Fig. \ref{fig1} shows that CD-Thompson with a more sophisticated strategy on choosing the probability of updating the coordinate has better performance than Algorithm \ref{alg:Bandit}. As illustrated in Fig. \ref{fig1} and demonstrated in Theorem \ref{thm:main_bandit}, a larger value of $J$ yields a large value of $NR$, which leads to a slower convergence rate.  In summary, this simulation shows that the proposed algorithms yield faster convergence rates than the state-of-the-art algorithm \cite{8761672}. 
\begin{figure}[t]
	\centering
	\includegraphics[width=0.4\columnwidth]{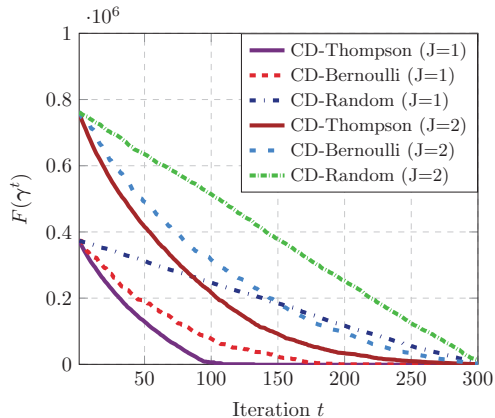}
	\caption{Convergence rates of coordinate descent with respect to three coordinate selection strategies.}
	\label{fig1}
\end{figure}
\subsection{Probability of Missed Detection}\label{sec:c}
Under the setting of  $L = 200, J=1,M = 16$, the computational time of three algorithms is further illustrated in Fig. \ref{fig2}. It shows that the proposed algorithms achieve the same level of detection accuracy with much less computational time than the algorithm in \cite{8761672}. The reason is that the coordinate selection with Bernoulli sampling or Thompson sampling is able to choose the coordinate that yields a larger descent in the objective value. Additionally, Fig. \ref{fig2} also shows that Algorithm \ref{alg:Thompson} can further reduce the computational time, compared to Algorithm \ref{alg:Bandit}. This is achieved by a better exploitation/exploration trade-off in Algorithm \ref{alg:Thompson} which eludes the situation where the large value of ${\nu}_{i}^t$ in Algorithm \ref{alg:Thompson} is maintained in many time steps, which leads to a high computational cost for computing $r_k^t$ for all $k\in[NR]$ in time step $t$.

\begin{figure}[t]
	\centering
	\includegraphics[width=0.4\columnwidth]{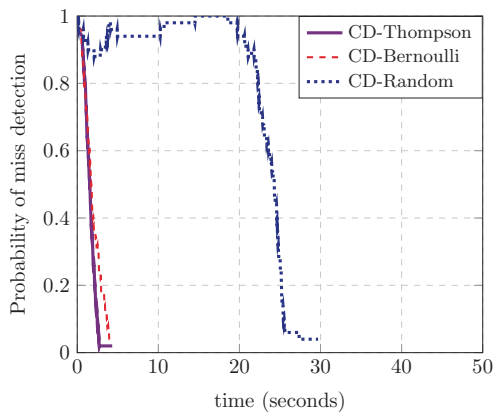}
	\caption{Probability of missed detection vs. computational time.}
	\label{fig2}
\end{figure}
\subsection{Applications in Low-precision ADCs}
In this part, we test the proposed algorithms with low-precision ADCs. For the quantization procedure, we use the typical uniform quantizer with the quantization step-size $s_{\mathrm{q}}= 0.5$. For $b$-bit quantization, the threshold of this uniform quantizer is given by 
\begin{align}
r_z = (-2^{b-1}+ z)s_{\mathrm{q}},\quad \text{for}~z=1, \ldots, 2^{b}-1,
\end{align}
 and the element of the quantization output $\bm{Y}_{\mathrm{q}}$ (\ref{eq:q}) is assigned the value $r_z -\frac{s_{\mathrm{q}}}{2}$ when
the input falls in the $z$-th bin, i.e., $\left(r_{z-1}, r_{z}\right]$.

Under the same setting as Section \ref{sec:c}, Fig. \ref{fig3} shows the unquantization case, and the quantization case with different quantization levels, i.e., $b = \{1,2,3\}$. To further illustrate the computational cost of the proposed algorithm applied to the low-precision ADCs, Fig. \ref{fig4} shows the probability of missed detection with respect to computational time. These results demonstrate that $3$-bit quantization is sufficient to achieve similar convergence rate and accuracy as the unquantization scenario.
\begin{figure}[t]
	\centering
	\includegraphics[width=0.4\columnwidth]{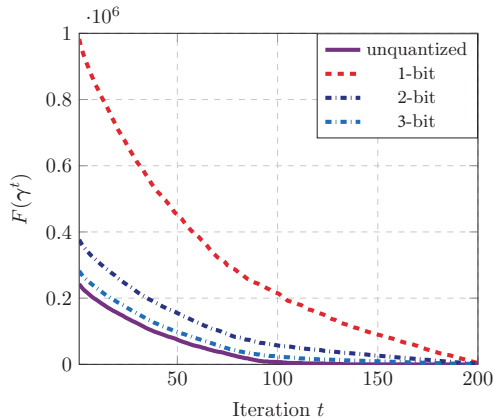}
	\caption{Convergence rates of coordinate descent with Thompson sampling for massive connectivity with low-precision ADCs.}
	\label{fig3}
\end{figure}
\begin{figure}[t]
	\centering
	\includegraphics[width=0.4\columnwidth]{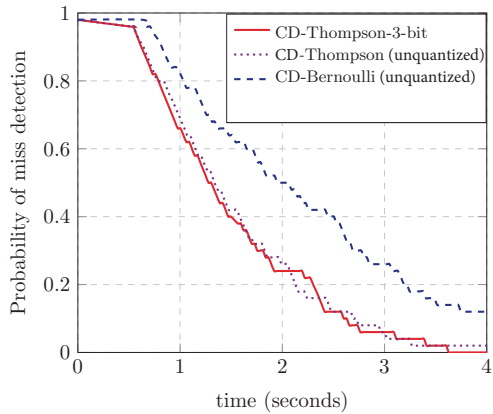} 
	\caption{Probability of missed detection.}
	\label{fig4}
\end{figure}
\section{Conclusions}
In this paper, we developed efficient algorithms based on multi-armed bandit to solve the joint device activity detection and data decoding problem in massive random access. Specifically, we exploited a multi-armed
bandit algorithm to learn to update the coordinate, thereby resulting in more aggressive descent of the objective function. To further improve the convergence rate, an inner multi-armed bandit problem was established to improve the exploration policy. The performance gains in the convergence rate and time complexity of the proposed algorithms over the start-of-the-art algorithm were demonstrated both theoretically and empirically. Furthermore, our proposed algorithms can be applied to a more general scenario, i.e., activity and data detection in the low-precision analog-to-digital
converters (ADCs), thereby saving energy and reducing the power consumption.

Our proposed algorithm only updates a single coordinate at each time step $t$. It is interesting to further investigate the effect of 
choosing multiple coordinates from a budget at each time step. At a high level, the proposed approach can be regarded as an instance of ``learning to optimize'', i.e., applying machine learning to solve optimization problems. Specifically, it belongs to optimization policy learning \cite{bengio2018machine}, which learns a specific policy for some optimization algorithm. One related work is \cite{8879693}, which learns the pruning policy of the branch-and-bound algorithm. It is interesting to apply such an approach to other optimization algorithms to improve the computational efficiency for massive connectivity.

\appendices
\section{Computation of the Reward Function}\label{sec:reward}
In this section, we derive the reward function for the multiple-armed bandit problem for coordinate descent. Define $k \in [N]$ as the index of the selected coordinate and define 
$F_k(d)=F(\bm{\gamma}+ d \bm{e}_k)$ where $\bm{e}_k$ denotes the $k$-th canonical basis with a single $1$ at its $k$-th coordinate and zeros elsewhere.
We can simplify $F_k(d)$ as follows
\begin{align}
F_k(d)=& \log \big | \bm{\Sigma} \big | +\frac{1}{M}{\mathrm{Tr}}\left( {{{\bm{\Sigma }}^{ - 1}}{\bm{Y}}{{\bm{Y}}^{\mathsf{H}}}} \right)+ \log (1+ d\, \bm{a}_k^{\mathsf{H}} \bm{\Sigma} ^{-1} \bm{a}_k)  -  \frac{  \bm{a}_k^{\mathsf{H}} \bm{\Sigma}^{-1} \widehat{\bm{\Sigma}}_{\bm{y}} \bm{\Sigma}^{-1} \bm{a}_k }{1+ d \, \bm{a}_k^{\mathsf{H}} \bm{\Sigma}^{-1} \bm{a}_k} d. \label{dumm_ml_bpp3}
\end{align}
According to \cite{haghighatshoar2018improved}, the global minimum of $F_k(d)$ in $(-\frac{1}{\bm{a}_k^{\mathsf{H}} \bm{\Sigma}^{-1}\bm{a}_k},+\infty)$ is
$
\delta= \frac{ \bm{a}_k^{\mathsf{H}} \bm{\Sigma}^{-1} \widehat{\bm{\Sigma}}_{\bm{y}} \bm{\Sigma}^{-1} \bm{a}_k -  \bm{a}_k^{\mathsf{H}} \bm{\Sigma}^{-1}\bm{a}_k }{(\bm{a}_k^{\mathsf{H}} \bm{\Sigma}^{-1}\bm{a}_k )^2},
$
 so the descent value of the cost function $F(\bm{\gamma})$ is:
\begin{align}
 F(\bm{\gamma}) - F_k(\delta)&=F(\bm{\gamma}) -F(\bm{\gamma}+\delta) =\frac{\bm{a}_{k}^{\mathsf{H}} \boldsymbol{\Sigma}^{-1} \widehat{\boldsymbol{\Sigma}}_{\bm{y}} \boldsymbol{\Sigma}^{-1} \bm{a}_{k}}{1+\delta \bm{a}_{k}^{\mathsf{H}} \boldsymbol{\Sigma}^{-1} \bm{a}_{k}} \delta-\log\left(1+\delta \bm{a}_{k}^{\mathsf{H}} \boldsymbol{\Sigma}^{-1} \bm{a}_{k}\right).
\end{align}
Hence, the reward function $r_k$ is defined as
$
r_k =  F(\bm{\gamma}) - F_k(\delta).
$

\section{Primary theorems for the Proof of Theorem \ref{thm:main_bandit}}
Several theorems are needed to pave the way for the proof of Theorem \ref{thm:main_bandit}. 
\begin{theorem} \label{thm:linear}
	Recall the reward function $r_k$ defined in \eqref{eq:reward}.
	Under the assumptions of Lemma~\ref{lemma:1}, if we choose the coordinate $k$ with the largest $r_k^t$ at the $t$-th iteration, it yields the following linear convergence guarantee: 		
	\begin{equation} \label{eq:convergence_gaurantee}
	\epsilon(\bm{\gamma}^{t}) \leq
	\epsilon(\bm \gamma^0) \prod_{j=1}^t \left(1 -  \max_{k\in [d]} \frac{r_k^t}{{\sum_{\ell}r_{\ell}^t}} \right),
	\end{equation}
	for all $t>0$, where $\epsilon(\bm{\gamma}^{0})$ is the sub-optimality gap at  $t=0$. 
\end{theorem}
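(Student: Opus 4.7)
The plan is to derive a one-step multiplicative recursion on $\epsilon(\bm{\gamma}^t) = F(\bm{\gamma}^t) - F(\bm{\gamma}^\star)$ that carries the desired shrinkage factor $1 - \max_k r_k^t / \sum_\ell r_\ell^t$, and then chain it across iterations. Since the algorithm picks $k^t := \arg\max_{k \in [d]} r_k^t$ and applies rule $\mathcal{H}_{k^t}$, Lemma~\ref{lemma:1} gives $F(\bm{\gamma}^{t+1}) \leq F(\bm{\gamma}^t) - \max_k r_k^t$, and subtracting $F(\bm{\gamma}^\star)$ yields the additive recursion $\epsilon(\bm{\gamma}^{t+1}) \leq \epsilon(\bm{\gamma}^t) - \max_k r_k^t$. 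Rewriting this as $\epsilon(\bm{\gamma}^{t+1}) \leq \epsilon(\bm{\gamma}^t)\bigl(1 - \max_k r_k^t / \epsilon(\bm{\gamma}^t)\bigr)$ reduces the theorem to establishing the \emph{sum-of-rewards inequality} $\sum_{\ell=1}^d r_\ell^t \geq \epsilon(\bm{\gamma}^t)$; once this is in hand, substituting $\sum_\ell r_\ell^t$ for $\epsilon(\bm{\gamma}^t)$ in the denominator only enlarges the shrinkage factor, and chaining from $t=0$ gives exactly \eqref{eq:convergence_gaurantee}.

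To establish the sum-of-rewards inequality, I would construct a fictitious full-coordinate sweep starting from $\bm{\gamma}^{(0)} := \bm{\gamma}^t$ and best-responding in each coordinate in turn to form $\bm{\gamma}^{(1)}, \ldots, \bm{\gamma}^{(d)}$. Telescoping the objective gives $F(\bm{\gamma}^t) - F(\bm{\gamma}^{(d)}) = \sum_{k=1}^d \bigl[F(\bm{\gamma}^{(k-1)}) - F(\bm{\gamma}^{(k)})\bigr]$, where each summand is a single-coordinate reward evaluated at the intermediate iterate $\bm{\gamma}^{(k-1)}$. I would then argue, via the closed-form expression of $r_k$ derived in Appendix~\ref{sec:reward} together with the log-determinant-plus-trace structure of $F$, that each intermediate reward is dominated by the corresponding reward $r_k^t$ computed at $\bm{\gamma}^t$. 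Combined with $F(\bm{\gamma}^{(d)}) \geq F(\bm{\gamma}^\star)$, this delivers $\sum_\ell r_\ell^t \geq \epsilon(\bm{\gamma}^t)$.

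The principal obstacle is precisely this sum-of-rewards bound in the non-convex regime. Salehi~\textit{et al.}~\cite{salehi2018coordinate} dispatch the analogous step by convexity plus a gradient-inner-product argument, but that avenue is closed here because the log-determinant contribution makes $F$ non-convex. Overcoming it will likely require a direct Sherman--Morrison and matrix-determinant-lemma calculation on each restriction $F_k(d)$ along the sweep, possibly leveraging the approximate sparsity of $\bm{\gamma}^\star$ noted after \eqref{eq:mle_r}, so that only a small number of coordinates contribute non-trivially to the telescoped sum and the domination of intermediate rewards by $r_k^t$ can be made explicit.
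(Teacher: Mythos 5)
Your one-step recursion is exactly the paper's: Lemma~\ref{lemma:1} applied to $k^{\star}=\arg\max_{k}r_{k}^{t}$ gives $\epsilon(\bm{\gamma}^{t+1})\leq\epsilon(\bm{\gamma}^{t})-r_{k^{\star}}^{t}$, and rewriting $r_{k^{\star}}^{t}=\big(\sum_{\ell}r_{\ell}^{t}\big)\max_{k}\frac{r_{k}^{t}}{\sum_{\ell}r_{\ell}^{t}}$ together with the inequality $\sum_{\ell}r_{\ell}^{t}\geq\epsilon(\bm{\gamma}^{t})$ yields the contraction factor $1-\max_{k}r_{k}^{t}/\sum_{\ell}r_{\ell}^{t}$, which is then chained over iterations. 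The paper, however, simply invokes $\sum_{\ell}r_{\ell}^{t}\geq\epsilon(\bm{\gamma}^{t})$ as ``the fact'' with no justification, whereas you correctly isolate it as the crux of the theorem in the non-convex setting; on that point your diagnosis is sharper than the paper's exposition.

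The genuine gap is in your proposed proof of that sum-of-rewards inequality: the telescoping argument cannot work as stated because both of your ingredients bound $F(\bm{\gamma}^{t})-F(\bm{\gamma}^{(d)})$ from \emph{above}. Domination of each intermediate reward by $r_{k}^{t}$ gives $F(\bm{\gamma}^{t})-F(\bm{\gamma}^{(d)})\leq\sum_{\ell}r_{\ell}^{t}$, while $F(\bm{\gamma}^{(d)})\geq F(\bm{\gamma}^{\star})$ gives $F(\bm{\gamma}^{t})-F(\bm{\gamma}^{(d)})\leq\epsilon(\bm{\gamma}^{t})$; two upper bounds on the same quantity do not let you compare $\sum_{\ell}r_{\ell}^{t}$ with $\epsilon(\bm{\gamma}^{t})$. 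To close the telescoping route you would need the reverse inequality $F(\bm{\gamma}^{(d)})\leq F(\bm{\gamma}^{\star})$, i.e., a single best-response sweep reaching the global minimum, which is false in general and hopeless for a non-convex objective. Be aware also that $\sum_{\ell}r_{\ell}\geq\epsilon(\bm{\gamma})$ is not a structural triviality: it already fails for ill-conditioned convex quadratics (where each coordinate-wise improvement is $O(\epsilon^{2})$ while the optimality gap is $O(\epsilon)$), so it genuinely requires an argument exploiting the specific log-determinant-plus-trace form of $F$ and the constraint $\bm{\gamma}\geq 0$. Neither your sketch nor the paper supplies such an argument; this is the one step that must be filled in for the theorem to stand.
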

\begin{proof}
Please refer to Appendix \ref{proof_thm:linear} for details.
\end{proof}

\begin{theorem} \label{thm:main}
	Under the assumptions of Lemma~\ref{lemma:1}, we have the following convergence guarantee:
	\begin{equation} \label{eq:convergence}
	\epsilon(\bm{\gamma}^{t}) \leq  \frac{ \eta^2}{NR+t - t_0}
	\end{equation}
	for all $t\geq t_0$, where $t_0 = \max \{1,NR \log \frac{NR \epsilon(\bm{\gamma}^0)}{\eta^2} \}$, $\epsilon(\bm{\gamma}^{0})$ is the sub-optimality gap at  $t=0$ and $\eta = O(NR)$ is an upper bound on
	$\min_{k\in[NR]} \frac{\sum_{\ell}r_{\ell}^t}{r_{k}^t}$ for all iterations $j \in [t]$.
\end{theorem}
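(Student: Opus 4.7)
The plan is to combine the multiplicative contraction of Theorem~\ref{thm:linear} with an inductive argument that converts the resulting geometric decay into the sublinear form $\eta^{2}/(NR+t-t_{0})$ that Theorem~\ref{thm:main_bandit} will later consume. The time $t_{0}$ is chosen precisely as the crossover point where the geometric bound first falls to $\eta^{2}/NR$, and the induction then extends the bound for all subsequent $t$.

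First I would observe that the assumption ``$\eta$ upper bounds $\min_{k}\sum_{\ell}r_{\ell}^{t}/r_{k}^{t}$ uniformly in $t$'' is equivalent to $\max_{k} r_{k}^{t}/\sum_{\ell}r_{\ell}^{t}\geq 1/\eta$ at every iteration. Substituting this lower bound into the product in \eqref{eq:convergence_gaurantee} collapses the telescoping estimate to the clean geometric form $\epsilon(\bm{\gamma}^{t})\leq \epsilon(\bm{\gamma}^{0})(1-1/\eta)^{t}\leq \epsilon(\bm{\gamma}^{0})\exp(-t/\eta)$ valid for every $t\geq 0$.

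For the base case $t=t_{0}$, I would use $\eta\leq NR$ so that $\exp(-t/\eta)\leq\exp(-t/(NR))$. Plugging in $t_{0}=\max\{1,NR\log(NR\,\epsilon(\bm{\gamma}^{0})/\eta^{2})\}$ gives $\epsilon(\bm{\gamma}^{t_{0}})\leq \epsilon(\bm{\gamma}^{0})\exp(-t_{0}/(NR))\leq \eta^{2}/NR$, which is exactly the right-hand side of \eqref{eq:convergence} at $t=t_{0}$ since $NR+t_{0}-t_{0}=NR$. The $\max$ with $1$ in the definition of $t_{0}$ handles the degenerate case where $\epsilon(\bm{\gamma}^{0})$ is already below $\eta^{2}/NR$.

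The inductive step is the algebraic core. Assume $\epsilon(\bm{\gamma}^{t})\leq \eta^{2}/(NR+t-t_{0})$ and apply one more contraction to get
\[
\epsilon(\bm{\gamma}^{t+1})\leq \frac{\eta^{2}(1-1/\eta)}{NR+t-t_{0}}=\frac{\eta^{2}-\eta}{NR+t-t_{0}}.
\]
Cross-multiplying reduces the desired bound $\eta^{2}/(NR+t+1-t_{0})$ to the inequality $\eta\leq NR+t+1-t_{0}$, which holds since $t\geq t_{0}$ and $\eta=\mathcal{O}(NR)\leq NR$. This closes the induction and yields \eqref{eq:convergence} for all $t\geq t_{0}$.

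The main obstacle is the handoff at $t=t_{0}$: one must verify that the logarithmic warm-up time $t_{0}$ drives the geometric bound to \emph{exactly} $\eta^{2}/NR$ (not just something proportional), so that the inductive chain starts on a tight inequality rather than a loose one. A secondary subtlety is that the per-step contraction rests on the greedy coordinate rule that is the standing hypothesis of Theorem~\ref{thm:linear}, so the induction must consistently invoke the same rule throughout.
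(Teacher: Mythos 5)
Your proposal is correct and follows essentially the same route as the paper: a per-step multiplicative contraction, a geometric-decay estimate that pins $\epsilon(\bm{\gamma}^{t_0})$ below $\eta^2/(NR)$ at the warm-up time $t_0$, and an induction on $t$ thereafter. The only cosmetic differences are that you contract by $(1-1/\eta)$ where the paper uses $(1-1/(NR))$ — both close the induction, yours needing the harmless normalization $\eta \leq NR$, which is always achievable since $\sum_{\ell}r_{\ell}^t/\max_k r_k^t \leq NR$ — and that you verify the base case directly where the paper phrases the same computation as a proof by contradiction.
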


\section{Proof of Theorem \ref{thm:linear}}\label{proof_thm:linear}
The selection strategy concerned in this proof is to choose the coordinated $k$ with the largest reward function $r_k^t$ defined in (\ref{eq:reward}), which is denoted by $k^{\star}$.
Hence, based on the fact $\sum_{\ell}r_{\ell}^t\geq \epsilon(\bm{\gamma}^t)$ it yields that
\begin{align} \label{eq:temp_l3}
\epsilon(\bm{\gamma}^{t+1}) - \epsilon(\bm{\gamma}^t) 
= F(\bm{\gamma}^{t+1}) - F(\bm{\gamma}^t)  \leq - r_{{k^{\star}}}^t 
-\sum_{\ell}r_{\ell}^t \max_{k\in [NR]} \frac{r_k^t }{\sum_{\ell}r_{\ell}^t } \leq - \epsilon(\bm{\gamma}^t) \max_{k\in [NR]} \frac{r_k^t }{\sum_{\ell}r_{\ell}^t },
\end{align}
that induces
\begin{align} \label{eq:temp_l4}
\epsilon(\bm{\gamma}^{t+1}) \leq
\epsilon(\bm{\gamma}^t) - \epsilon(\bm{\gamma}^t) \max_{k\in [NR]} \frac{r_k^t }{\sum_{\ell}r_{\ell}^t },
\end{align}
which leads to
\begin{align} \label{eq:temp_l5}
\epsilon(\bm{\gamma}^{t+1}) \leq
\epsilon(\bm{\gamma}^t) \left( 1 - \max_{k\in [NR]} \frac{r_k^t }{\sum_{\ell}r_{\ell}^t } \right).
\end{align}

\section{Proof of Theorem \ref{thm:main}}

According to $F(\bm{\gamma}^{t+1})- F(\bm{\gamma}^t) = \epsilon(\bm{\gamma}^{t+1})- \epsilon(\bm{\gamma}^t)$, we get
$
\epsilon(\bm{\gamma}^{t+1}) - \epsilon(\bm{\gamma}^t)
\leq -{r_{k^\star}^t}.
$

As ${k^\star}$ is the coordinate with the largest $r_{k}^t$, we have
\begin{equation} \label{eq:temp}
\epsilon(\bm{\gamma}^{t+1})  - \epsilon(\bm{\gamma}^t)
\leq -{r_{k^\star}(\bm{\gamma}^t)} \leq -\frac{\sum_{\ell}r_{\ell}^t}{NR}.
\end{equation}

According to the definition of $\epsilon(\bm{\gamma}^t)$ (\ref{epsilon}) and the coordinate-wise reward function (\ref{eq:reward}), we have $\epsilon(\bm{\gamma}^t) \leq \sum_{\ell=1}^{NR}r_{\ell}^t$. Plugging the inequality $\epsilon(\bm{\gamma}^t) \leq \sum_{\ell=1}^{NR}r_{\ell}^t$ in \eqref{eq:temp} yields
\begin{align} \label{eq:1_bound}
\epsilon(\bm{\gamma}^{t+1}) - \epsilon(\bm{\gamma}^t)\leq -\frac{\sum_{\ell}r_{\ell}^t}{NR} \leq -\frac{\epsilon(\bm{\gamma}^t)}{NR},
\end{align}
thus, it arrives
\begin{equation} \label{eq:decrease_s1}
\epsilon(\bm{\gamma}^{t+1}) \leq \epsilon(\bm{\gamma}^t) \cdot \left(1-\frac{1}{NR}\right).
\end{equation}

Furthermore, the inductive step at time $j+1$ is justified by plugging \eqref{eq:convergence} in \eqref{eq:decrease_s1}:
\begin{align}
\begin{aligned}
\epsilon(\bm{\gamma}^{t+1}) &\leq \frac{\eta^2}{NR + t - t_0} \left(1-\frac{1}{NR}\right) \leq \frac{\eta^2}{NR+t+1-t_0}.
\end{aligned}
\end{align}

To complete the proof, the induction base case for $t = t_0$ needs to be justified, i.e., we need to show that \begin{equation} \label{eq:basis}
\epsilon(\bm \gamma^{t_0}) \leq \frac{\eta^2}{  NR}.
\end{equation}
%
%
The proof based on the contradiction is used to identify the induction base, that is, assuming $\epsilon(\bm{\gamma}^{t_0}) > \frac{ \eta^2}{  NR}$ leads to a contradiction.
If $\epsilon(\bm{\gamma}^{t_0}) > \frac{ \eta^2}{  NR}$, then 
\begin{equation} \label{eq:contradic}
\frac{1}{NR} < \frac{\epsilon(\bm{\gamma}^{t_0}) }{ \eta^2}. 
\end{equation} 
Based on (\ref{eq:decrease_s1}), there is
\begin{equation}
\epsilon(\bm{\gamma}^{t_0}) \leq \epsilon(\bm{\gamma}^{0}) \left(1-\frac{1}{NR} \right)^{t_0}.
\end{equation}
Based on the inequality such that $1 + x < \exp(x)$ for $x<1$ we have
\begin{align*}
\epsilon(\bm{\gamma}^{t_0}) \leq \epsilon(\bm{\gamma}^{0}) \exp(-\frac{t_0}{NR}) \leq \epsilon(\bm{\gamma}^{0}) \exp(-\log \frac{NR  \cdot \epsilon(\bm{\gamma}^0)}{ \eta^2}) 
= \epsilon(\bm{\gamma}^{0}) \frac{ \eta^2}{  NR\cdot \epsilon(\bm{\gamma}^0)} = \frac{ \eta^2}{  NR},
\end{align*}
which yields a contradiction with respect to  the assumption $\epsilon(\bm{\gamma}^{t_0}) > \frac{ \eta^2}{  NR}$. It thus shows that the induction base holds and completes the proof.

\section{proof of Theorem \ref{thm:main_bandit}}\label{proof_t1}
We first consider the iterate $\bm{\gamma}^t$ at the $t$-th iteration of the coordinate descent with Bernoulli sampling (illustrated in Algorithm \ref{alg:Bandit}).
Suppose that \eqref{eq:convergence} holds for some $t \geq t_0$. We shall verify it for $t+1$. 
We start the analysis by computing the expected marginal decrease for $\varepsilon$ in Algorithm~\ref{alg:Bandit}, 
\begin{align} \label{eq:decrease1}
\mathbb{E} \left[r_k^t|\bm{\gamma}^t\right] \geq (1-\varepsilon)\frac{1}{c_1\cdot NR}   r_k^t +  \varepsilon \frac{r^t_{{k^{\star}}}}{c},
\end{align}
where $c_1>0$ is some finite constant and $c$ is a finite constant defined in Theorem~\ref{thm:main_bandit} and ${k^{\star}}= \arg\max_{k \in [NR]} r_k^t$. The expectation is with respect to the random choice of the algorithm.

For all $k \in [NR]$,  it holds
\begin{align} 
\mathbb{E} \left[r_k^t|\bm{\gamma}^t\right] \geq (1-\varepsilon)\frac{1}{c_1\cdot NR} \left( \sum_{\ell=1}^{NR} \frac{(r_\ell^t)^2  }{ NR}\right) + \varepsilon
 \frac{(r_{{k^{\star}}}^t)^2}{ c}  \geq
  (1-\varepsilon) \frac{\left(\sum_{\ell=1}^{NR} r_{\ell}^t\right)^2}{(NR)^2c_1} + \varepsilon \frac{\left(\sum_{\ell=1}^{NR} r_{\ell}^t\right)^2}{\eta^2c}  \label{eq:decrease3},
\end{align}
where \eqref{eq:decrease3} follows from the assumption $\sum_{\ell}r_{\ell}^t \leq \eta r_{{k^{\star}}}^t$ in Theorem~\ref{thm:main_bandit}.
Plugging the inequality $\epsilon(\bm{\gamma}^t) < \sum_{\ell}r_{\ell}^t$ in \eqref{eq:decrease3}, it yields
\begin{align} 
\mathbb{E} \left[r_k^t|\bm{\gamma}^t\right]  \geq
 \epsilon^2(\bm{\gamma}^t) \left(  \frac{1-\varepsilon}{(NR)^2c_1} + \frac{\varepsilon}{\eta^2c} \right) = 
\frac{\epsilon^2(\bm{\gamma}^t)}{\alpha}
. \label{eq:second}
\end{align}

Then, based on \eqref{eq:second}, the induction hypothesis is scrutinized by
\begin{align}\label{eq:decrease4}
\mathbb{E} [\epsilon(\bm{\gamma}^{t+1})] - \mathbb{E}[\epsilon(\bm{\gamma}^{t})]
 \leq ~ \mathbb{E}\left[r_k^t|\bm{\gamma}^t\right]   \leq 
-\mathbb{E}\left[ \frac{\epsilon^2(\bm{\gamma}^t)}{\alpha}  \right]\leq-\frac{\mathbb{E}[\epsilon(\bm{\gamma}^t)]^2}{\alpha},
\end{align}
where the last inequality is based on the Jensen's inequality (i.e., $\mathbb{E}[\epsilon(\bm{\gamma}^t)]^2 \leq \mathbb{E}[\epsilon^2(\bm{\gamma}^t)]$). By reformulating the terms in \eqref{eq:decrease4} we get
\begin{align} \label{eq:decrease6}
\mathbb{E}[\epsilon(\bm{\gamma}^{t+1})] &\leq 
\mathbb{E}\left[\epsilon(\bm{\gamma}^{t}) \right] \left(1- \frac{\mathbb{E}\left[\epsilon(\bm{\gamma}^{t}) \right]}{\alpha} \right).
\end{align}

Let $f(x) = x\left(1-\frac{x}{\alpha}\right)$, as $f'(x)>0$ for $x< \frac{\alpha}{2}$, and plugging \eqref{eq:convergence_prop} in \eqref{eq:decrease6}, it leads to the inductive step at time $t+1$:
\begin{align} 
\mathbb{E}[\epsilon(\bm{\gamma}^{t+1})]
\leq 
\mathbb{E}\left[\epsilon(\bm{\gamma}^{t}) \right] \left(1- \frac{\mathbb{E}\left[\epsilon(\bm{\gamma}^{t}) \right]}{\alpha} \right)
 \leq
\frac{\alpha}{1+t-t_0} \cdot \left(1-\frac{1}{1+t-t_0} \right) \leq 	\frac{\alpha}{1+t+1-t_0}.
\label{eq:temp4}
\end{align}

We are left to show that the induction basis is satisfied.
By using the inequality \eqref{eq:decrease4} for $t=1,\ldots,t_0$ we get 
\begin{align}\label{eq:decrease7}
\mathbb{E}[\epsilon(\bm{\gamma}^{t_0})] &\leq \epsilon(\bm{\gamma}^{0}) 
- \sum_{t=0}^{t_0-1}\frac{\mathbb{E}[\epsilon(\bm{\gamma}^t)]^2}{\alpha} .
\end{align}

Since at each iteration the cost function decreases, we have $\epsilon(\bm{\gamma}^{t+1})\leq \epsilon(\bm{\gamma}^{t})$ for all $t\geq 0$. 
Hence, if  $\mathbb{E}[\epsilon(\bm{\gamma}^{t})]\leq \frac{\alpha}{2}$ for each $0\leq t \leq t_0$, it concludes that $\mathbb{E}[\epsilon(\bm{\gamma}^{t_0})]\leq \frac{\alpha}{2}$. 
The induction hypothesis is justified via showing that $\mathbb{E}[\epsilon(\bm{\gamma}^{t_0})] > \frac{\alpha}{2}$ results in a contradiction.
Under this assumption, \eqref{eq:decrease7} is reformulated as
\begin{align}\label{eq:decrease8}
\mathbb{E}[\epsilon(\bm{\gamma}^{t_0})] &\leq 
\epsilon(\bm{\gamma}^{0}) - t_0 \frac{\alpha}{2} 
= \epsilon(\bm{\gamma}^{0}) \left( 1- t_0 \frac{\alpha}{2\epsilon(\bm{\gamma}^{0})} \right).
\end{align}
Furthermore, based on the inequality $1+x\leq \exp(x)$ with \eqref{eq:decrease8}, we get
\begin{align}\label{eq:decrease9}
E[\epsilon(\bm{\gamma}^{t_0})] &\leq 
\epsilon(\bm{\gamma}^{0}) \exp\left( -t_0 \frac{\alpha}{2\epsilon(\bm{\gamma}^{0})} \right).
\end{align}
We plug $t_0 = \frac{2 \epsilon(\bm{\gamma}^{0})}{\alpha} \log(\frac{ \epsilon(\bm{\gamma}^{0})}{\alpha})$
in \eqref{eq:decrease9} to get
$
E[\epsilon(\bm{\gamma}^{t_0})] \leq {\alpha} ,
$
which completes the proof.

Then, we focus on the analysis of the iterate $\bm{\gamma}^t$ at the $t$-th iteration of the coordinate descent with random sampling (illustrated in Algorithm \ref{tab:ML_coord}).
Suppose that \eqref{eq:convergence} holds for some $t \geq t_0$. We want to verify it for $t+1$. 
The analysis is begin with computing the expected marginal decrease for $\varepsilon$ in Algorithm \ref{tab:ML_coord}. For some constant $c_2>0$, there is
$
\mathbb{E} \left[r_k^t|\bm{\gamma}^t\right] \geq \frac{1}{c_2\cdot NR}   r_k^t.
$
For all $k \in [NR]$,  it has
\begin{align} 
\mathbb{E} \left[r_k^t|\bm{\gamma}^t\right] \geq\frac{1}{c_2\cdot NR} \left( \sum_{\ell=1}^{NR} \frac{(r_\ell^t)^2  }{ NR}\right) \geq
 \frac{\left(\sum_{\ell=1}^{NR} r_{\ell}^t\right)^2}{(NR)^2c_2} \label{eq:decrease5}.
\end{align}
We plug the inequality $\epsilon(\bm{\gamma}^t) < \sum_{\ell}r_{\ell}^t$ in \eqref{eq:decrease5}, and get
\begin{align} 
\mathbb{E} \left[r_k^t|\bm{\gamma}^t\right]  \geq
 \frac{\epsilon^2(\bm{\gamma}^t)}{(NR)^2c_2}
. \label{eq:second_}
\end{align}

Based on \eqref{eq:second_} and  Jensen's inequality to check the induction hypothesis
\begin{align}\label{eq:decrease4_}
\mathbb{E} [\epsilon(\bm{\gamma}^{t+1})] - \mathbb{E}[\epsilon(\bm{\gamma}^{t})]
\leq ~ \mathbb{E}\left[r_k^t|\bm{\gamma}^t\right] \leq-\frac{\mathbb{E}[\epsilon(\bm{\gamma}^t)]^2}{(NR)^2c_2}.
\end{align}
The following proof for the convergence analysis for Algorithm \ref{tab:ML_coord} is similar to the proof for Algorithm  \ref{alg:Bandit} as discussed above. Hence, we omit the details here.

\section{Proof of Theorem \ref{th:N-arms}}\label{sec:proof_bandit}
In this section, we prove Theorem \ref{th:N-arms} which demonstrates the expected regret for the $N$-armed bandit problem in Algorithm \ref{alg:Thompson}. Recall that all arms are assumed to have Bernoulli distributed rewards, and that the first arm is the unique optimal arm. 

\textbf{Main technical arguments.}
Thompson sampling performs exploration
by selecting the arm with the best sampled mean to play. Therein, sampled means are generated from beta
distributions around the empirical means. As the number of plays of an arm increases, the beta distribution converges to the corresponding empirical mean. The main technical issue needed to be addressed in the analysis is that if the number of previous plays of the first arm is small, then the probability of playing the second arm will be as large as a constant even if it has already been played a large number of times. To address this, we introduce two types of arms, i.e., saturated and unsaturated arms, and bound the regret caused by each arm separately.
 Different from the previous analysis of Thompson sampling where the parameters of the beta distribution are required to be integral, i.e., \cite{agrawal2012analysis,scott2010modern}, our analysis applies to the beta distribution of which the parameters are in more general and natural form, represented in (\ref{eq:alpha}) and (\ref{eq:beta}).

\textbf{Notaions.} We take the inner 2-armed bandit problem in Algorithm \ref{alg:Thompson} as an example to illustrate corresponding notations in our paper. We denote $j_0$ as the number of plays of the first arm until \add{$T_p$} plays of the second arm. Denote $t_j$ as the time step where the $j$-th play of the first arm occurs (note that $t_0=0$). Furthermore, $Y_j\add{=t_{j+1}-t_j-1}$ is defined to characterize the number of time steps between the $j$-th and $(j+1)$-th plays of the first arm. The random variable $s_j$ is represented the number of successes in the first $j$ plays of the first arm. 

The random variable
$X(j,s,y)$ is defined to characterize the expectation of $Y_j$. To begin with, considering perform an experiment until it succeeds: examine if a 
\[\mbox{Beta}(s+R^j, j-s+R^j)\] distributed random variable surpasses a threshold $y$. Here, $R^j =r_{k_{t_j}}^{t_j}/F(\bm{\gamma}^{t_j})$ with $F(\bm{\gamma}^t)$ defined in (\ref{eq:mle_r}) and $r_{k_{t_j}}^{t_j}$ defined in (\ref{eq:reward}) is the reward obtained in Algorithm \ref{alg:Thompson} when the first arm of the inner MAB is played. For each experiment, the beta-distributed random variables are generated independently of the previous ones.  
We define $X(j,s,y)$ as the number of trials {before} 
the experiment succeeds. Thus, $X(j,s,y)$ is a random variable with parameter (success probability) $1-F^{beta}_{s+R^j,j-s+R^j}(y)$. Here $F^{beta}_{\alpha, \beta}$ denotes the cumulative distribution function (cdf) of the beta distribution with parameters $\alpha, \beta$. Also, let $F^B_{n,p}$ denote the cdf of the \emph{binomial} distribution with parameters $(n,p)$.

\textbf{Proof.} At any step $t$, we divide the set of suboptimal arms into two subsets: \emph{saturated} and \emph{unsaturated}. The saturated arm $i$ is the arm which have been played an enough large number ($L_i = c_L(\ln T)/\Delta^2_i$) for some large constant $c_L>0$ of times. The set of saturated arms at time $t$ is denoted as $C(t)$. Note that, for the set $C(t)$, with high probability, $\nu_i(t)$ is concentrated around $\mu_i$.
To bound the regret, we begin with estimating the number of steps between two consecutive plays of the first arm. After the $j$-th play of the first arm, the $(j+1)$-th play of the first arm will happen at the earliest time $t$ where $\nu_1(t) > \nu_i(t), \forall i\ne 1$. The number of steps before $\nu_1(t)$ is larger than $\nu_i(t)$ of each saturated arm $a \in C(t)$, and can be tightly approximated via a geometric random variable with the parameter being $\Pr(\nu_1 \ge \max_{a \in C(t)} \mu_i)$. We justify that the expected number of steps until the $(j+1)$-th play can be upper bounded by the product of the expected value of a geometric random variable $X(j,s_j, \max_i \mu_i)$, if $j$ plays of the first arm with $s_j$ have succeeded.
Additionally, the expected number of interruptions by the unsaturated arms is bounded by $\sum_{u=2}^N L_u$, since an arm $u$ becomes saturated after $L_u$ plays.

Based on the above discussion, the expected regret of the inner $q$-armed stochastic bandit problem in Algorithm \ref{alg:Thompson} can be bounded by the regrets due to unsaturated arms at saturated arms, given by
\begin{align}\label{eq:r}
\mathbb{E}[{\cal R}(T)]\leq
\mathbb{E}[{\cal R}_{\mathrm{uns}}(T)]+\mathbb{E}[{\cal R}_{\mathrm{s}}(T)].
\end{align}
Since an unsaturated arm $u$ becomes saturated after $L_u$ plays, the regret generated by unsaturated arms is bounded by
 \begin{align}
\mathbb{E}[{\cal R}_{\mathrm{uns}}(T)]\leq\sum_{u=2}^N L_u \Delta_u  = c_L (\ln T) \left(\sum_{u=2}^N \frac{1}{\Delta_u}\right),
 \end{align}
for some large constant $c_L>0$ .
Prior to bounding $\mathbb{E}[{\cal R}_{\mathrm{s}}(T)]$ in (\ref{eq:r}), we introduce some notations. Denote $\theta_j$ as the total number of plays of unsaturated arms in the interval between (and excluding) the $j^{th}$ and $(j+1)^{th}$ plays of the 
first arm. Thus the regret due to th play of the saturated arm can be approximated bounded by \cite{agrawal2012analysis}
\small\begin{align}\mathbb{E}[{\cal R}_{\mathrm{s}}(T)]
\leq C\cdot\left(\sum_{i=2}^q L_i\right)\cdot\left[\left(\sum_{j=0}^{\sum_{i} L_{i}} \sum_{i} \Delta_{i} \mathbb{E}\left[\min \left\{X\left(j, s_j, y_{i}\right), T\right\} | s_j\right]\right)\right],
\end{align}
for some constant $C>0$.

To complete the proof, the term $\mathbb{E}\left[\min \left\{X\left(j, s_j, y_{i}\right), T\right\} | s_j\right] =\frac{1}{1-F^{beta}_{s+R^j,j-s+R^j}(y)}-1 $ is required to be bounded.
Our proof is inspired by the paper \cite{agrawal2012analysis}. However, different from the previous analysis of Thompson sampling where the parameters of the beta distribution are required to be integral, i.e. \cite{agrawal2012analysis,scott2010modern}, our analysis applies to the beta distribution of which the parameters the beta distribution are in more general and natural form, represented in (\ref{eq:alpha}) and (\ref{eq:beta}). Hence, it yields that 
$\mathbb{E}[{\cal R}_{\mathrm{s}}(T)]\leq C\left(\left(\sum_{i} L_{i}\right)^{2}\right)=C\cdot\left(\sum_{i} \frac{\log T}{d_{i}^{2}}\right)^{2}$.
Hence, we conclude that 
\begin{align}
\mathbb{E}[{\cal R}(T)]\leq&
\mathbb{E}[{\cal R}_{\mathrm{uns}}(T)]+\mathbb{E}[{\cal R}_{\mathrm{s}}(T)]
\leq c_L (\ln T) \left(\sum_{u=2}^N \frac{1}{\Delta_u}\right)+C\cdot\left(\sum_{i} \frac{\log T}{d_{i}^{2}}\right)^{2}
=O\left(\left(\sum_{b=2}^q\frac{1}{d_b^2}\right)^2 \ln T\right).
\end{align}

	\bibliographystyle{ieeetr}

\begin{thebibliography}{}

\end{thebibliography}


\begin{thebibliography}{10}
	
	\bibitem{zanella2014internet}
	A.~Zanella, N.~Bui, A.~Castellani, L.~Vangelista, and M.~Zorzi, ``Internet of
	{T}hings for smart cities,'' {\em {IEEE} Internet Things J.}, vol.~1,
	pp.~22--32, Feb. 2014.
	
	\bibitem{8808168}
	K.~B. {Letaief}, W.~{Chen}, Y.~{Shi}, J.~{Zhang}, and Y.~A. {Zhang}, ``The
	roadmap to 6{G}: {AI} empowered wireless networks,'' {\em IEEE Communications
		Magazine}, vol.~57, pp.~84--90, Aug. 2019.
	
	\bibitem{8454392}
	L.~{Liu}, E.~G. {Larsson}, W.~{Yu}, P.~{Popovski}, C.~{Stefanovic}, and E.~{de
		Carvalho}, ``Sparse signal processing for grant-free massive connectivity: A
	future paradigm for random access protocols in the {I}nternet of {T}hings,''
	{\em IEEE Signal Process. Mag.}, vol.~35, pp.~88--99, Sep. 2018.
	
	\bibitem{hasan2013random}
	M.~Hasan, E.~Hossain, and D.~Niyato, ``Random access for machine-to-machine
	communication in {LTE}-advanced networks: Issues and approaches,'' {\em IEEE
		Commun. Mag.}, vol.~51, pp.~86--93, Jun. 2013.
	
	\bibitem{liu2018sparse}
	L.~Liu, E.~G. Larsson, W.~Yu, P.~Popovski, C.~Stefanovic, and E.~De~Carvalho,
	``Sparse signal processing for grant-free massive connectivity: A future
	paradigm for random access protocols in the {I}nternet of {T}hings,'' {\em
		IEEE Signal Process. Mag.}, vol.~35, pp.~88--99, Sep. 2018.
	
	\bibitem{arunabha2010fundamentals}
	A.~Ghosh, J.~Zhang, J.~G. Andrews, and R.~Muhamed, ``Fundamentals of {LTE},''
	{\em Englewood Cliffs, NJ, USA: Prentice-Hall}, 2010.
	
	\bibitem{chen2018sparse}
	Z.~Chen, F.~Sohrabi, and W.~Yu, ``Sparse activity detection for massive
	connectivity,'' {\em IEEE Trans. Signal Process.}, vol.~66, pp.~1890--1904,
	Jan. 2018.
	
	\bibitem{senel2018grant}
	K.~Senel and E.~G. Larsson, ``Grant-free massive {MTC}-enabled massive {MIMO}:
	A compressive sensing approach,'' {\em {IEEE} Trans. Commun.}, vol.~66,
	pp.~6164--6175, Aug. 2018.
	
	\bibitem{8761672}
	Z.~{Chen}, F.~{Sohrabi}, Y.~{Liu}, and W.~{Yu}, ``Covariance based joint
	activity and data detection for massive random access with massive {MIMO},''
	in {\em IEEE Int. Conf. Commun. (ICC)}, pp.~1--6, May 2019.
	
	\bibitem{liu2018massive1}
	L.~Liu and W.~Yu, ``Massive connectivity with massive {MIMO} part {I}: Device
	activity detection and channel estimation,'' {\em IEEE Trans. on Signal
		Process.}, vol.~66, pp.~2933--2946, Mar. 2018.
	
	\bibitem{haghighatshoar2018improved}
	S.~Haghighatshoar, P.~Jung, and G.~Caire, ``Improved scaling law for activity
	detection in massive {MIMO} systems,'' in {\em Proc. IEEE Int. Symp. Inf.
		Theory (ISIT)}, pp.~381--385, IEEE, 2018.
	
	\bibitem{chen2019phase}
	Z.~Chen and W.~Yu, ``Phase transition analysis for covariance based massive
	random access with massive {MIMO},'' in {\em Asilomar Conf. Signals Syst.
		Comput.}, 2019.
	
	\bibitem{wright2015coordinate}
	S.~J. Wright, ``Coordinate descent algorithms,'' {\em Math. Program.},
	vol.~151, no.~1, pp.~3--34, 2015.
	
	\bibitem{shalev2013accelerated}
	S.~Shalev-Shwartz and T.~Zhang, ``Accelerated mini-batch stochastic dual
	coordinate ascent,'' in {\em Proc. Neural Inf. Process. Syst. (NeurIPS)},
	pp.~378--385, 2013.
	
	\bibitem{shalev2013stochastic}
	S.~Shalev-Shwartz and T.~Zhang, ``Stochastic dual coordinate ascent methods for
	regularized loss minimization,'' {\em J. Mach. Learn. Res.}, vol.~14,
	no.~Feb, pp.~567--599, 2013.
	
	\bibitem{nutini2015coordinate}
	J.~Nutini, M.~Schmidt, I.~Laradji, M.~Friedlander, and H.~Koepke, ``Coordinate
	descent converges faster with the {G}auss-{S}outhwell rule than random
	selection,'' in {\em Proc. Int. Conf. Mach. Learn. (ICML)}, pp.~1632--1641,
	2015.
	
	\bibitem{salehi2018coordinate}
	F.~Salehi, P.~Thiran, and E.~Celis, ``Coordinate descent with bandit
	sampling,'' in {\em Proc. Neural Inf. Process. Syst. (NeurIPS)},
	pp.~9247--9257, 2018.
	
	\bibitem{perekrestenko2017faster}
	D.~Perekrestenko, V.~Cevher, and M.~Jaggi, ``Faster coordinate descent via
	adaptive importance sampling,'' in {\em Proc. Int. Conf. Articial
		Intelligence and Statistics (AISTATS)}, pp.~869--877, 2017.
	
	\bibitem{zhao2015stochastic}
	P.~Zhao and T.~Zhang, ``Stochastic optimization with importance sampling for
	regularized loss minimization,'' in {\em Proc. Int. Conf. Mach. Learn.
		(ICML)}, pp.~1--9, 2015.
	
	\bibitem{agrawal2012analysis}
	S.~Agrawal and N.~Goyal, ``Analysis of {T}hompson sampling for the multi-armed
	bandit problem,'' in {\em Proc. Conference On Learning Theory (COLT)},
	vol.~23, pp.~39.1--39.16, 2012.
	
	\bibitem{scott2010modern}
	S.~L. Scott, ``A modern {B}ayesian look at the multi-armed bandit,'' {\em Appl.
		Stoch. Model Bus.}, vol.~26, no.~6, pp.~639--658, 2010.
	
	\bibitem{wunder2015sparse}
	G.~Wunder, H.~Boche, T.~Strohmer, and P.~Jung, ``Sparse signal processing
	concepts for efficient 5{G} system design,'' {\em IEEE Access}, vol.~3,
	pp.~195--208, Feb. 2015.
	
	\bibitem{Yuanming_IoT2018device}
	T.~{Jiang}, Y.~{Shi}, J.~{Zhang}, and K.~B. {Letaief}, ``Joint activity
	detection and channel estimation for {I}o{T} networks: Phase transition and
	computation-estimation tradeoff,'' {\em IEEE Internet Things J.}, vol.~6,
	pp.~6212--6225, Aug. 2019.
	
	\bibitem{liu2018massive2}
	L.~Liu and W.~Yu, ``Massive connectivity with massive {MIMO} part {II}:
	Achievable rate characterization,'' {\em IEEE Trans. on Signal Process.},
	vol.~66, pp.~2947--2959, Mar. 2018.
	
	\bibitem{bubeck2012regret}
	S.~Bubeck, N.~Cesa-Bianchi, {\em et~al.}, ``Regret analysis of stochastic and
	nonstochastic multi-armed bandit problems,'' {\em Found. Trends Mach.
		Learn.}, vol.~5, pp.~1--122, Dec. 2012.
	
	\bibitem{chapelle2011empirical}
	O.~Chapelle and L.~Li, ``An empirical evaluation of {T}hompson sampling,'' in
	{\em Proc. Neural Inf. Process. Syst. (NeurIPS)}, pp.~2249--2257, 2011.
	
	\bibitem{mo2017channel}
	J.~Mo, P.~Schniter, and R.~W. Heath, ``Channel estimation in broadband
	millimeter wave {MIMO} systems with few-bit {ADC}s,'' {\em IEEE Trans. on
		Signal Process.}, vol.~66, pp.~1141--1154, Dec. 2017.
	
	\bibitem{7894211}
	S.~{Jacobsson}, G.~{Durisi}, M.~{Coldrey}, U.~{Gustavsson}, and C.~{Studer},
	``Throughput analysis of massive {MIMO} uplink with low-resolution {ADC}s,''
	{\em ‎IEEE Wireless Commun.}, vol.~16, pp.~4038--4051, Jun. 2017.
	
	\bibitem{wen2015bayes}
	C.-K. Wen, C.-J. Wang, S.~Jin, K.-K. Wong, and P.~Ting, ``Bayes-optimal joint
	channel-and-data estimation for massive {MIMO} with low-precision {ADC}s,''
	{\em IEEE Trans. Signal Process.}, vol.~64, pp.~2541--2556, Dec. 2015.
	
	\bibitem{bengio2018machine}
	Y.~Bengio, A.~Lodi, and A.~Prouvost, ``Machine learning for combinatorial
	optimization: a methodological tour d'horizon,'' {\em arXiv preprint
		arXiv:1811.06128}, 2018.
	
	\bibitem{8879693}
	Y.~{Shen}, Y.~{Shi}, J.~{Zhang}, and K.~B. {Letaief}, ``{LORM}: Learning to
	optimize for resource management in wireless networks with few training
	samples,'' {\em IEEE Trans. Wireless Commun.}, vol.~19, pp.~665--679, Jan.
	2020.
	
\end{thebibliography}

\end{document}